\newtheorem{theorem}{Theorem}
\newtheorem{corollary}{Corollary}
\newif\ifarxivcopy
\newif\ifshowchanges
\icmltitlerunning{Attentive Group Equivariant Convolutional Networks}
\begin{document}

\twocolumn[
\icmltitle{Attentive Group Equivariant Convolutional Networks}

% It is OKAY to include author information, even for blind
% submissions: the style file will automatically remove it for you
% unless you've provided the [accepted] option to the icml2019
% package.

% List of affiliations: The first argument should be a (short)
% identifier you will use later to specify author affiliations
% Academic affiliations should list Department, University, City, Region, Country
% Industry affiliations should list Company, City, Region, Country

% You can specify symbols, otherwise they are numbered in order.
% Ideally, you should not use this facility. Affiliations will be numbered
% in order of appearance and this is the preferred way.
\icmlsetsymbol{equal}{*}

\begin{icmlauthorlist}
\icmlauthor{David W. Romero}{vu}
\icmlauthor{Erik J. Bekkers}{uva}
\icmlauthor{Jakub M. Tomczak}{vu}
\icmlauthor{Mark Hoogendoorn}{vu}
\end{icmlauthorlist}

\icmlaffiliation{vu}{Vrije Universiteit Amsterdam,}
\icmlaffiliation{uva}{University of Amsterdam, The Netherlands}

\icmlcorrespondingauthor{David W. Romero}{d.w.romeroguzman@vu.nl}

% You may provide any keywords that you
% find helpful for describing your paper; these are used to populate
% the "keywords" metadata in the PDF but will not be shown in the document
\icmlkeywords{Machine Learning, ICML, Group Convolutions, Equivariance, Self-Attention, Visual Attention, Group Equivariant Attention, Group Equivariant Visual Attention}

\vskip 0.3in
]

% this must go after the closing bracket ] following \twocolumn[ ...

% This command actually creates the footnote in the first column
% listing the affiliations and the copyright notice.
% The command takes one argument, which is text to display at the start of the footnote.
% The \icmlEqualContribution command is standard text for equal contribution.
% Remove it (just {}) if you do not need this facility.

\printAffiliationsAndNotice{}  % leave blank if no need to mention equal contribution
%\printAffiliationsAndNotice{\icmlEqualContribution} % otherwise use the standard text.

\begin{abstract}
Although group convolutional networks are able to learn powerful representations based on symmetry patterns, they lack explicit means to learn meaningful relationships among them (e.g., relative positions and poses). In this paper, we present \textit{attentive group equivariant convolutions}, a generalization of the group convolution, in which attention is applied during the course of convolution to accentuate meaningful symmetry combinations and suppress non-plausible, misleading ones. We indicate that prior work on visual attention can be described as special cases of our proposed framework and show empirically that our \textit{attentive group equivariant convolutional networks} consistently outperform conventional group convolutional networks on benchmark image datasets. Simultaneously, we provide interpretability to the learned concepts through the visualization of equivariant attention maps.
\end{abstract}

% =====SECTION=====
\section{Introduction} \label{sec:intro}
Convolutional Neural Networks (CNNs) \cite{lecun1989backpropagation} have shown impressive performance in a wide variety of domains. The developments of CNNs as well as of many other machine learning approaches have been fueled by intuitions and insights into the composition and \textit{modus operandi} of multiple biological systems \cite{wertheimer1938gestalt, biederman1987recognition, delahunt2019insect, blake2005role,v1hypothesis, delahunt2019insect}. 
Though CNNs have achieved remarkable performance increases on several benchmark problems, their training efficiency as well as generalization capabilities are still open for improvement.
One concept being exploited for this purpose is that of \emph{equivariance}, again drawing inspiration from human beings. 

Humans are able to identify familiar objects despite modifications in location, size, viewpoint, lighting conditions and background \cite{bruce1994recognizing}. In addition, we do not just recognize them but are able to describe in detail the type and amount of modification applied to them as well \cite{vonHelmholtz1868, cassirer1944concept, schmidt2016perception}. Equivariance is strongly related to the idea of \textit{symmetricity}. As these modifications do not modify the essence of the underlying object, they should be treated (and learned) as a single concept. Recently, several approaches have embraced these ideas to preserve symmetries including translations \cite{lecun1989backpropagation}, planar rotations \cite{dieleman2016, marcos2017rotation, HNet, weiler2018learning, DREN, cheng2018rotdcf, hoogeboom2018hexaconv, bekkers2018roto, veeling2018rotation, lenssen2018group, smets2020pde}, spherical rotations \cite{spherical, worrall2018cubenet, weiler20183d, thomas_tensor_2018, cohen2019gauge}, scaling \cite{marcos2018scale,worrall2019deep, sosnovik2020scaleequivariant} and general symmetry groups \cite{GCNN, kondor2018generalization, weiler2019general, cohen2019general, bekkers2020bspline,Romero2020Co-Attentive, Venkataraman2020Building}. 
\begin{figure}
    \centering
        \begin{subfigure}{0.22\textwidth}
        \includegraphics[width=\textwidth]{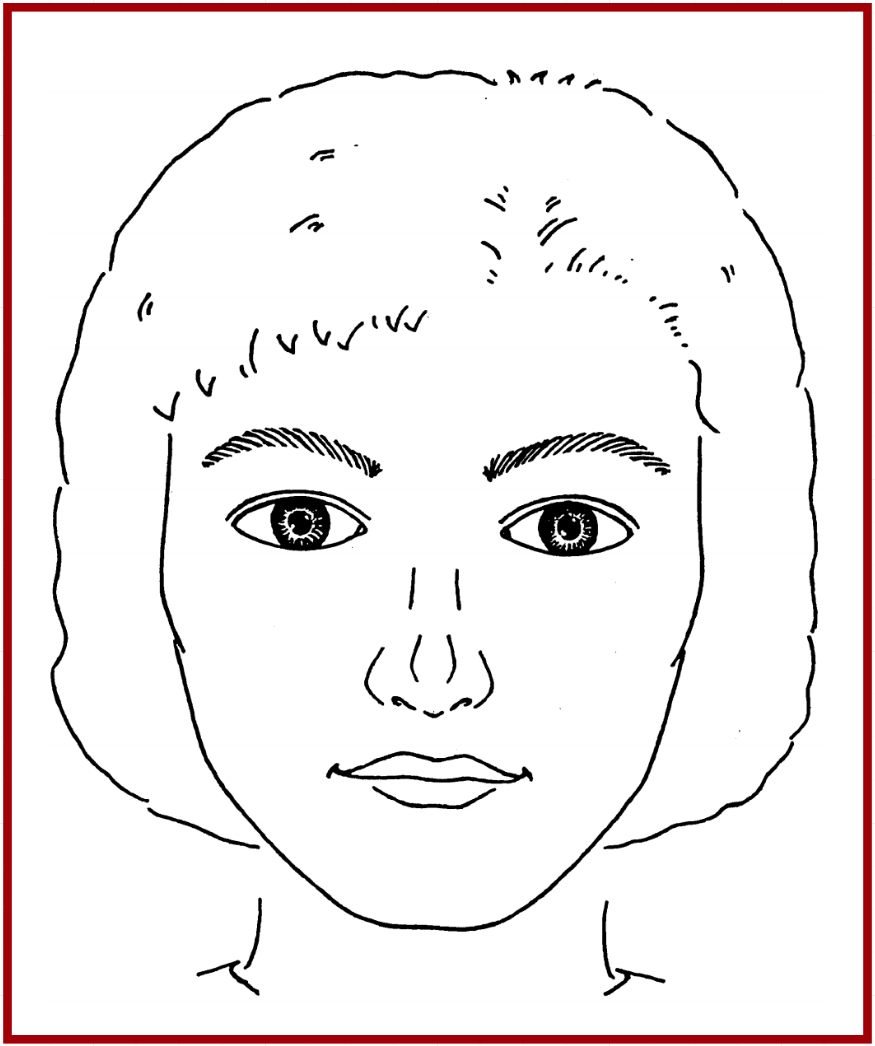}
    \end{subfigure}
        \begin{subfigure}{0.24\textwidth}
        \includegraphics[angle=90, width=\textwidth]{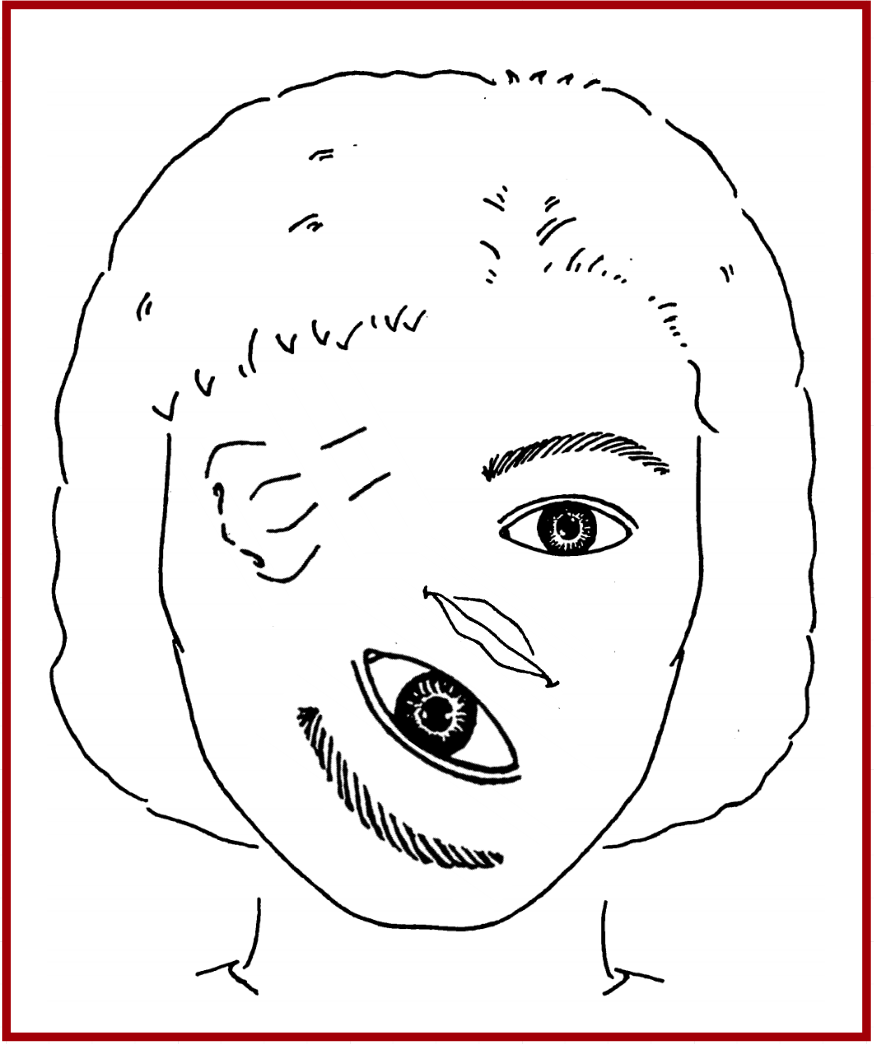}
    \end{subfigure}
    \begin{subfigure}{0.22\textwidth}
        \includegraphics[width=\textwidth]{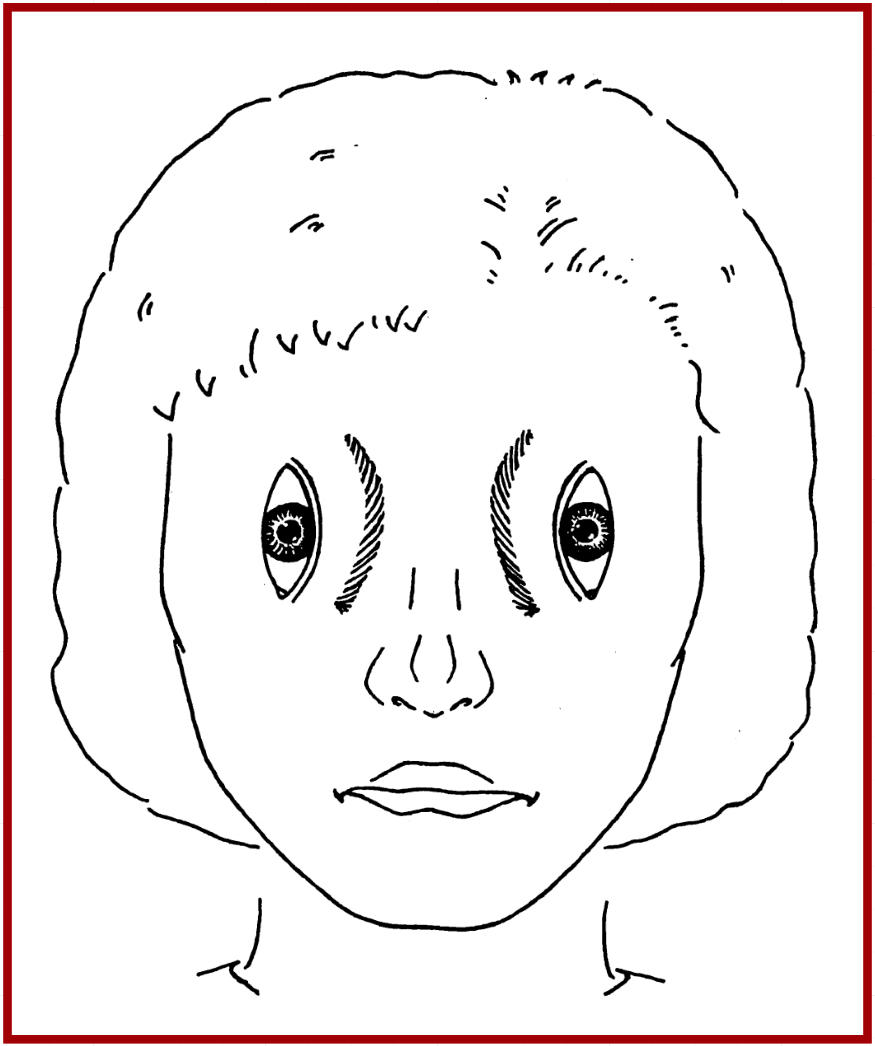}
    \end{subfigure}
        \begin{subfigure}{0.24\textwidth}
        \includegraphics[angle=-90, width=\textwidth]{images/good_face.png}
    \end{subfigure}
    \vskip -4mm
    \caption{Meaningful relationships among object symmetries. Though every figure is composed by the same elements, only the outermost examples resemble faces. The relative positions, orientations and scales of elements in the innermost examples do not match any meaningful face composition and hence, should not be labelled as such. Built upon Fig. 1 from \citet{schwarzer2000development}.}\label{fig:picasso}
\end{figure} 

While  group convolutional networks are able to learn powerful representations based on symmetry patterns, they lack any explicit means to learn meaningful relationships among them, e.g., relative positions, orientations and scales (Fig.~\ref{fig:picasso}). In this paper, we draw inspiration from another promising development in the machine learning domain driven by neuroscience and psychology (e.g., \citet{pashler2016attention}), \emph{attention}, to learn such relationships. The notion of attention is related to the idea that not all components of an input signal are \textit{per se} equally relevant for a particular task. As a consequence, given a task and a particular input signal, task-relevant components of the input should be focused during its analysis while irrelevant, possibly misleading ones should be suppressed. Attention has been broadly applied to fields ranging from natural language processing \cite{bahdanau2014neural, cheng2016long, vaswani2017attention} to visual understanding \cite{xu2015show, ilse2018attention, park2018bam, woo2018cbam, ramachandran2019stand, diaconu2019affine, Romero2020Co-Attentive} and graph analysis \cite{velivckovic2017graph, zhang2020hypersagnn}.

Specifically, we present \textit{attentive group convolutions}, a generalization of the group convolution, in which attention is applied during convolution to accentuate meaningful symmetry combinations and suppress non-plausible, possibly misleading ones. We indicate that prior work on visual attention can be described as special cases of our proposed framework and show empirically that our \textit{attentive group equivariant group convolutional networks} consistently outperform conventional group equivariant ones on rot-MNIST and CIFAR-10 for the $SE(2)$ and $E(2)$ groups. In addition, we provide means to interpret the learned concepts trough the visualization of the predicted equivariant attention maps.

\textbf{Contributions:}
\begin{itemize}[topsep=0pt, leftmargin=*]
\itemsep0em
  \item We propose a general group theoretical framework for equivariant visual attention, \textit{the attentive group convolution}, and show that prior works on visual attention are special cases of our framework.
    \item We introduce a specific type of network referred to as \textit{attentive group convolutional networks} as an instance of this theoretical framework. 
    \item We show that our \textit{attentive group convolutional networks} consistently outperform plain group equivariant ones.
  \item We provide means to interpret the learned concepts via visualization of the predicted equivariant attention maps.
\end{itemize}
\section{Preliminaries} \label{sec:prelim}
Before describing our approach, we first define crucial prior concepts: (group) convolutions and attention mechanisms.

\subsection{Spatial Convolution and Translation Equivariance}
Let $f$, $\psi:\mathbb{R}^{d}\rightarrow \mathbb{R}^{N_{\tilde{c}}}$ be a vector valued signal and filter on $\mathbb{R}^{d}$, such that $f=\{f_{\tilde{c}}\}_{\tilde{c}=1}^{N_{\tilde{c}}}$ and $\psi=\{\psi_{\tilde{c}}\}_{\tilde{c}=1}^{N_{\tilde{c}}}$. The spatial convolution ($\star_{\mathbb{R}^{d}}$) is defined as:
\begin{equation} \label{eq:norm_conv}
\setlength{\abovedisplayskip}{2pt}
    \setlength{\belowdisplayskip}{2pt}
    [f \star_{\mathbb{R}^{d}} \psi](y) = \sum_{\tilde{c}=1}^{N_{\tilde{c}}}\int_{\mathbb{R}^{d}}f_{\tilde{c}}(x)\psi_{\tilde{c}}(x - y) \, {\rm d}x
\end{equation}
Intuitively, Eq. \ref{eq:norm_conv} resembles a collection of $\mathbb{R}^{d}$ inner products between the input signal $f$ and $y$-translated versions of $\psi$.\break
Since the continuous integration in Eq. \ref{eq:norm_conv} is usually performed on signals and filters captured in a discrete grid $\mathbb{Z}^{d}$, the integral on $\mathbb{R}^{d}$ is reduced to a sum on $\mathbb{Z}^{d}$. In our derivations, however, we stick to the continuous case as to guarantee the validity of our theory for techniques defined on continuous spaces, e.g., steerable and Lie group convolutions\break\cite{cohen2016steerable, HNet, bekkers2018roto, weiler2018learning, weiler20183d, thomas_tensor_2018, weiler2019general, bekkers2020bspline, sosnovik2020scaleequivariant}.

To study (and generalize) the properties of the convolution, we rewrite Eq. \ref{eq:norm_conv} using the translation operator
$\mathcal{L}_{y}$: 
\begin{equation}\label{eq:norm_conv_goperator}
\setlength{\abovedisplayskip}{2pt}
    \setlength{\belowdisplayskip}{2pt}
    [f \star_{\mathbb{R}^{d}} \psi](y) = \sum_{\tilde{c}=1}^{N_{\tilde{c}}}\int_{\mathbb{R}^{d}}f_{\tilde{c}}(x)\mathcal{L}_{y}[\psi_{\tilde{c}}](x) \,{\rm d}x
\end{equation}
where $\mathcal{L}_{y}[\psi_{\tilde{c}}](x) = \psi_{\tilde{c}}(x-y)$. Note that the translation operator $\mathcal{L}_{y}$ is indexed by an amount of translation $y$. Resultantly, we actually consider a set of operators $\{\mathcal{L}_{y}\}_{y \in \mathbb{R}^{d}}$ that indexes the set of all possible translations $y \in \mathbb{R}^{d}$.\break A fundamental property of the convolution is that it commutes with translations:
\begin{equation} \label{eq:equiv}
    \mathcal{L}_{y}[f \star_{\mathbb{R}^{d}} \psi](x) = \big[\mathcal{L}_{y}[f] \star_{\mathbb{R}^{d}} \psi \big](x), \ \  x,y \in \mathbb{R}^{d}.
\end{equation}
In other words, convolving a $y$-translated signal $\mathcal{L}_{y}[f]$ with a filter is equivalent to first convolving the original signal $f$ with the filter $\psi$, and $y$-translating the obtained response next. This property is referred to as \textit{translation equivariance} and, in fact, convolution (and reparametrizations thereof) is the \textit{only} linear \textit{translation equivariant} mapping \cite{ kondor2018generalization, cohen2019general, bekkers2020bspline}.

% =====SubSECTION=====
\subsection{Group Convolution and Group Equivariance}
The convolution operation can be extended to general transformations by utilizing a larger set of transformations $\{\mathcal{L}_{g}\}_{g \in G}$, s.t. $\{\mathcal{L}_{y}\}_{y \in \mathbb{R}^{d}} \subseteq \{\mathcal{L}_{g}\}_{g \in G}$. However, in order to\break preserve equivariance, we must restrict the class of transformations allowed in $\{\mathcal{L}_{g}\}_{g \in G}$. To formalize this intuition, we first present some important concepts from \textit{group theory}.

% =====SubSubSECTION=====
\subsubsection{Preliminaries from Group Theory} \label{sec:group_theory}
\textbf{Groups.} A \textit{group} is a tuple ($G$, $\cdot$) consisting of a set $G$, $g \in G$, and a binary operation $\cdot: G\times G \rightarrow G$, referred to as the \textit{group product}, that satisfies the following axioms: 
\begin{itemize} [topsep=0pt, leftmargin=*]
\itemsep0em
    \item \textit{Closure:} For all $h$, $g \in G$, $h \cdot g \in G$. 
    \item \textit{Identity:} There exists an $e \in G$, such that $e\cdot g = g \cdot e = g$.
    \item \textit{Inverse:} For all $g \in G$, there exists an element $g^{-1} \in G$, such that $g \cdot g^{-1} = g^{-1} \cdot g = e$.
    \item \textit{Associativity:} For all $g,h,k \in G$, $(g\cdot h) \cdot k = g \cdot (h \cdot k)$.
\end{itemize}
\textbf{Group actions.} Let $G$ and $X$ be a group and a set, respectively. The (left) \textit{group action} of $G$ on $X$ is a function $\odot: G \times X \rightarrow X$ that satisfies the following axioms:% the axioms:
\begin{itemize} [topsep=0pt, leftmargin=*]
\itemsep0em
    \item \textit{Identity:} If $e$ is the identity of $G$, then, for any $x \in X$, $e \odot x=x$. 
    \item \textit{Compatibility:} For all $g$, $h \in G$, $x \in X$, $g \odot (h \odot x) = (g \cdot h)\odot x$.
\end{itemize}
In other words, the action of $G$ on $X$ describes how the elements $x\in X$ are transformed by $g \in G$. %\break
For brevity, we omit the operations $\cdot$ and $\odot$ and refer to the set $G$ as a group, to elements $g \cdot h$ as $gh$ and to actions $(g \odot x)$ as $gx$. %It should be clear from context what operation is being referred to.

\textbf{Semi-direct product and affine groups.} In practice, one is mainly interested in the analysis of data (and hence convolutions) defined on $\mathbb{R}^{d}$. Consequently, groups of the form $G = \mathbb{R}^{d} \rtimes H$, resulting from the \textit{semi-direct product} ($\rtimes$) between the translation group $\mathbb{R}^{d}$ and an arbitrary (Lie) group $H$ that acts on $\mathbb{R}^{d}$ (e.g., rotation, scaling, mirroring), are of main interest. This family of groups is referred to as \textit{affine groups} and their group product is defined as:
\begin{equation}\label{eq:semidirect}
    g_{1}g_{2}=(x_{1},h_{1})(x_{2},h_{2})=(x_{1}+h_{1}x_{2}, h_{1} h_{2})
\end{equation}
where $g_{1}=(x_{1}, h_{1})$, $g_{2}=(x_{2}, h_{2}) \in G$, $x_{1}, x_{2} \in \mathbb{R}^{d}$ and $h_1, h_2 \in H$. Some important affine groups are the roto-translation ($SE(d) = \mathbb{R}^{d} \rtimes SO(d)$), the scale-translation ($\mathbb{R}^{d} \rtimes \mathbb{R}^{+}$) and the euclidean ($E(d)=\mathbb{R}^{d} \rtimes O(d)$) groups.

\textbf{Group representations.} Let $G$ be a group and $\mathbb{L}_{2}(X)$ be a space of functions defined on some vector space $X$. The (left) regular \textit{group representation} of $G$ on functions $f \in \mathbb{L}_{2}(X)$ is a transformation  $\mathcal{L}: G \times \mathbb{L}_{2}(X) \rightarrow \mathbb{L}_{2}(X)$, $(g,f) \mapsto \mathcal{L}_{g}[f]$, such that it shares the group structure via:
\begin{gather}
    \mathcal{L}_{g}\mathcal{L}_{h}[f](x) = \mathcal{L}_{gh}[f](x) \\
    \mathcal{L}_{g}[f](x) 
    := f(g^{-1}x) 
\end{gather}
for any $g, h \in G$, $f \in \mathbb{L}_{2}(X)$, $x \in X$. That is, concatenating two such transformations, parametrized by $g$ and $h$, is equivalent to one transformation parametrized by $gh \in G$.
Intuitively, the representation of $G$ on a function $f \in \mathbb{L}_{2}(X)$ describes how the function as a whole, i.e., $f(x)$, $\forall \ x \in X$, is transformed by the effect of group elements $g \in G$.

If the group $G$ is affine, i.e., $G = \mathbb{R}^{d} \rtimes H$, the (left) group representation $\mathcal{L}_{g}$ can be split as:
\begin{equation} \label{eq:repr_decomp}
\mathcal{L}_{g}[f](x) = \mathcal{L}_{y} \mathcal{L}_{h}[f](x)
\end{equation}
with $g = (y, h) \in G$, $y \in \mathbb{R}^{d}$ and $h \in H$. This property is key for the efficient implementation of functions on groups.

% =====SubSubSECTION=====
\subsubsection{The Group Convolution} 
Let $f$, $\psi:G\rightarrow \mathbb{R}^{N_{\tilde{c}}}$ be a vector valued signal and kernel on $G$. %, and $\mathcal{L}_{g}[\psi]$ be the $g$-transformed kernel. 
The group convolution ($\star_{G}$) is defined as: 
\begin{align}\label{eq:g_conv}
    [f\star_{G} \psi ](g)
    &=\sum_{\tilde{c}=1}^{N_{\tilde{c}}}\int_{G}f_{\tilde{c}}(\tilde{g})\psi_{\tilde{c}}(g^{-1}\tilde{g}) \,{\rm d}\tilde{g}\\ \label{eq:g_conv2}
    &=\sum_{\tilde{c}=1}^{N_{\tilde{c}}}\int_{G}f_{\tilde{c}}(\tilde{g})\mathcal{L}_{g}[\psi_{\tilde{c}}](\tilde{g}) \,{\rm d}\tilde{g}
\end{align} 
\begin{figure}[t]
    \centering
    \includegraphics[width=0.94\textwidth]{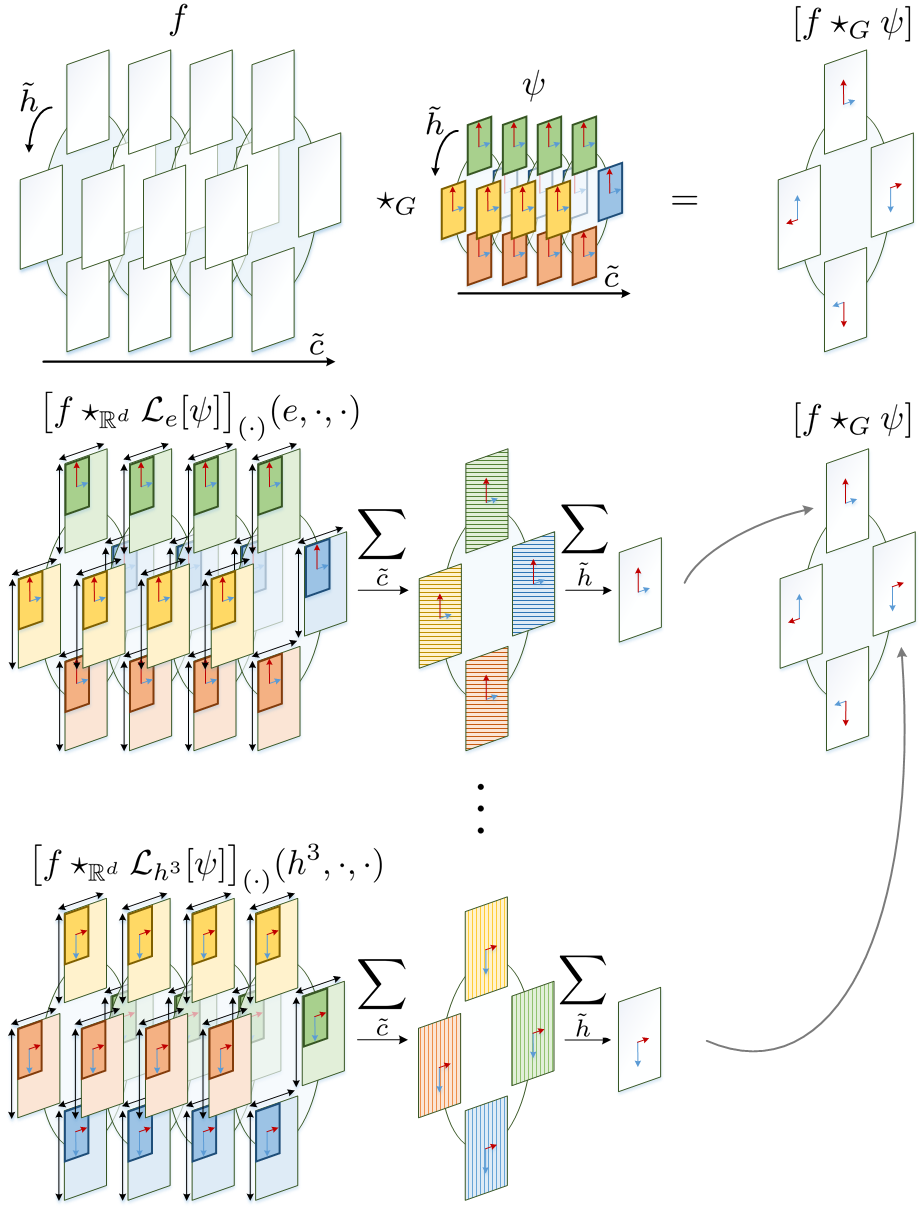}
    \vskip -4mm
    \caption{Group convolution on the roto-translation group $SE(2)$ for discrete rotations by 90 degrees (also called the $p4$ group). The $p4$ group is defined as $H = \{e, h, h^{2}, h^{3}\}$, with $h$ depicting a 90$^{\circ}$\break rotation. The group convolution corresponds to $|H|=4$ convolutions between the input $f$ and $h$-transformations of the filter $\psi$, $\mathcal{L}_{h}[\psi]$, $\forall \ h \in H$. Each of these convolutions is equal to the sum over group elements $\tilde{h} \in H$ and channels $\tilde{c} \in [N_{\tilde{c}}]$ of the spatial channel-wise convolutions $\big[f_{\tilde{c}} \star_{\mathbb{R}^{2}} \mathcal{L}_{h}[\psi_{\tilde{c}}]\big]$ among $f$ and $\mathcal{L}_{h}[\psi]$.}
    \label{fig:affine_conv}
\end{figure}
Differently to Eq. \ref{eq:norm_conv_goperator}, the domain of the signal $f$, the filter $\psi$ and the group convolution itself $[f \star_{G} \psi]$ are now defined on the group $G$.\footnote{Note that Eq. \ref{eq:norm_conv_goperator} matches Eq. \ref{eq:g_conv2} with the substitution $G=\mathbb{R}^{d}$. It follows that $\mathcal{L}_{g}[f](x)=f(g^{-1}x)=f(x-y)$, where $g^{-1}=-y$ is the inverse of $g$ in the translation group $(\mathbb{R}^{d}, +)$ for $g = y$.} Intuitively, the group convolution resembles a collection of inner products between the input signal $f$ and $g$-transformed versions of $\psi$.
A key property of the group convolution is that it generalizes equivariance (Eq. \ref{eq:equiv}) to arbitrary groups, i.e., it commutes with $g$-transformations:
\begin{equation}\label{eq:group_equiv}
        \mathcal{L}_{\bar{g}}[f \star_{G} \psi](g) = \big[\mathcal{L}_{\bar{g}}[f] \star_{G} \psi\big](g),\ \   g,\bar{g} \in G.
\end{equation}
In other words, group convolving a $\bar{g}$-transformed signal $\mathcal{L}_{\bar{g}}[f]$ with a filter $\psi$ is equivalent to first convolving the original signal $f$ with the filter $\psi$, and $\bar{g}$-transforming the obtained response next. This property is referred to as \textit{group equivariance} and, just as for spatial convolutions, the group convolution (or reparametrizations thereof) is the \textit{only} linear \textit{$G$-equivariant} map \cite{kondor2018generalization,cohen2019general,bekkers2020bspline}. 

% =====SubSubSECTION=====
\textbf{Group convolution on affine groups.} For affine groups, the group convolution (Eq. \ref{eq:g_conv2}) can be decomposed, without modifying its properties, by taking advantage of the group structure and the representation decomposition (Eq. \ref{eq:repr_decomp}) as:% in Eqs. \ref{eq:gconv_groupstruct} and \ref{eq:gconv_groupstruct_reprdecomp}, respectively,
\begin{align}
\setlength{\abovedisplayskip}{0pt}
    \setlength{\belowdisplayskip}{0pt}
    \setlength{\abovedisplayshortskip}{0pt}
\setlength{\belowdisplayshortskip}{0pt}
\hspace{-1mm}[f &\star_{G} \psi](g) =\sum_{\tilde{c}=1}^{N_{\tilde{c}}}  \int \limits_H \int \limits_{\mathbb{R}^{2}} f_{\tilde{c}}(\tilde{x},\tilde{h})\mathcal{L}_{g}[\psi_{\tilde{c}}](\tilde{x},\tilde{h}) \,{\rm d}\tilde{x}\,{\rm d}\tilde{h}\\[-1\jot] \label{eq:gconv_groupstruct_reprdecomp}
&\quad\quad\ \ \vspace{1mm}  =\sum_{\tilde{c}=1}^{N_{\tilde{c}}}\int \limits_H \int \limits_{\mathbb{R}^{2}} f_{\tilde{c}}(\tilde{x},\tilde{h})\mathcal{L}_{x}\mathcal{L}_{h}[\psi_{\tilde{c}}](\tilde{x},\tilde{h}) \,{\rm d}\tilde{x}\,{\rm d}\tilde{h}
\end{align}
where $g = (x,h)$, $\tilde{g}=(\tilde{x},\tilde{h}) \in G$, $x$, $\tilde{x}\in \mathbb{R}^{d}$ and $h$, $\tilde{h} \in H$. By doing so, the group convolution can be separated into $|H|$ spatial convolutions of the input signal $f$ for each $h$-transformed filter $\mathcal{L}_{h}[\psi]$ (Fig. \ref{fig:affine_conv}):
\begin{equation}
\setlength{\abovedisplayskip}{2pt}
    \setlength{\belowdisplayskip}{2pt}
 \label{eq:gconv_groupstruct_smallconvs}
[f \star_{G} \psi](x, h)=\sum_{\tilde{c}=1}^{N_{\tilde{c}}}\int_H \big[f_{\tilde{c}} \star_{\mathbb{R}^{2}} \mathcal{L}_{h}[\psi_{\tilde{c}}]\big](x, \tilde{h})\,{\rm d}\tilde{h}
\end{equation}
Resultantly, the computational cost of a group convolution is roughly equivalent
to that of a spatial convolution with a filter bank of size $N_{\tilde{c}}\times |H|$ \cite{GCNN, worrall2019deep, cohen2019gauge}.

% =====SubSECTION=====
\subsection{Attention, Self-Attention and Visual Attention} \label{sec:attention}
Attention mechanisms find their roots in recurrent neural network (RNN) based machine translation. Let  $\varphi(\cdot)$ be an arbitrary non-linear mapping (e.g., a neural network), $\underline{y}=\{y_{j}\}_{j=1}^{m}$ be a sequence of target vectors $y_i$, and $\underline{x}=\{x_{i}\}_{i=1}^{n}$ be a source sequence, whose elements influence the prediction of each value $y_{j} \in \underline{y}$. 
In early models (e.g., \citet{kalchbrenner2013recurrent, cho2014learning}), features in the input sequence are aggregated into a context vector $c = \sum_{i}\varphi(x_{i})$ which is used to augment the hidden state in RNN layers. These models assume that source elements $x_{i}$ contribute equally to \textit{every} target element $y_{j}$ and hence, that the same context vector $c$ can be utilized for all target positions $y_{j}$, which does not generally hold (Fig. \ref{fig:att}). 

\citet{bahdanau2014neural} proposed the inclusion of \textit{attention coefficients} $\alpha_{i}=\{\alpha_{i,j}\}$, $[n] = \{1, ..., n\}$, $i \in [n]$, $j \in [m]$, $\sum_{i}\alpha_{i,j}=1$, to modulate the contributions of the source elements $x_{i}$ as a function of the current target element $y_{j}$ by means of an adaptive context vector $c_{j} = \sum_{i}\alpha_{i, j}\varphi(x_{i})$. Thereby, they obtained large improvements both in performance and interpretability. 
Recently, attention has been extended to several other machine learning tasks (e.g., \citet{vaswani2017attention,velivckovic2017graph, park2018bam}). The main development behind these extensions was \textit{self-attention} \cite{cheng2016long}, where, in contrast to conventional attention, the target and source sequences are equal, i.e., $\underline{x}=\underline{y}$. Consequently, the attention coefficients $\alpha_{i,j}$ encode correlations among input element pairs $(x_{i}, x_{j})$.\break For vision tasks, self-attention has been proposed to encode visual co-occurrences in data \cite{hu2018squeeze, wang2018non, park2018bam, woo2018cbam, cao2019gcnet,bello2019attention, ramachandran2019stand, Romero2020Co-Attentive}. Unfortunately, its application on visual and, in general, on high-dimensional data is non-trivial.

\subsubsection{Visual Attention}\label{sec:visual_att}
\begin{figure}[t]
\floatbox[{\capbeside\thisfloatsetup{capbesideposition={right},capbesidewidth=3.8cm}}]{figure}[\FBwidth]
{\caption{English to French translation. Brighter depicts stronger influence. Note how relevant parts of the input sentence are highlighted as a function of the current output word during translation. Taken from \citet{bahdanau2014neural}.}\label{fig:att}}
{\includegraphics[width=3.2cm]{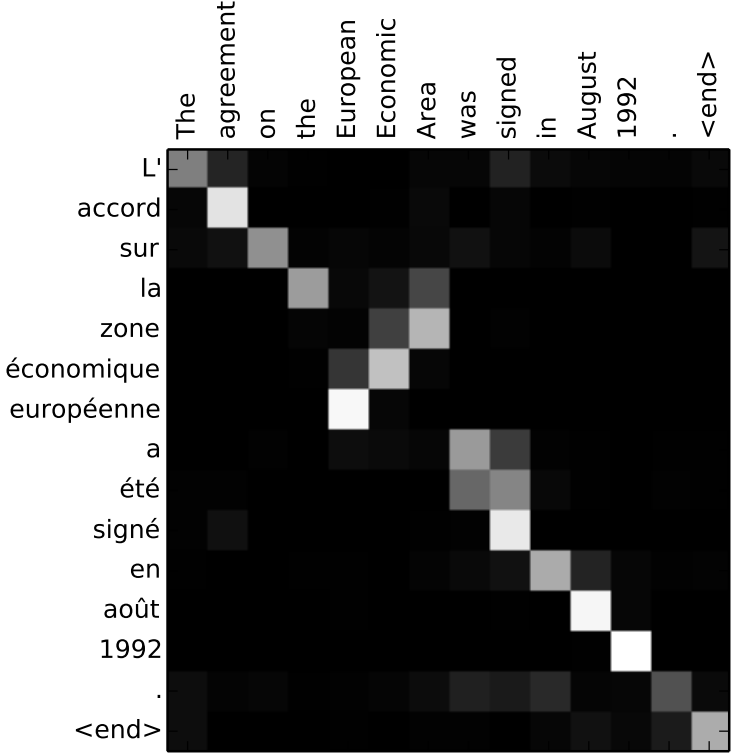}}
\end{figure}
In the context of visual attention, consider a feature map $f:{X}\rightarrow\mathbb{R}^{N_c}$ to be the source \enquote{sequence}\footnote{In the machine translation context we can think of $f$ as a sequence $\underline{x} = \{f(x_{i})\}_{i=1}^{n}$, with $n = |X|$ number of elements.}. Self-attention then imposes the learning of a total $n^{2} = |X|^2$ attention vectors $\alpha_{i,j} \in \mathbb{R}^{N_{\tilde{c}}}$, which rapidly becomes unfeasible with increasing feature map size.
Interestingly, \citet{cao2019gcnet} and \citet{zhu2019empirical} empirically demonstrated that, for visual data, the attention coefficients $\{\alpha_{i,j}\}$ are approximately invariant to changes in the target position $x_j$. 
Consequently, they proposed to approximate the attention coefficients $\{\alpha_{i,j}\} \in \mathbb{R}^{|X|^{2} \times N_{\tilde{c}}}$ by a single vector $\{\alpha_{i}\} \in \mathbb{R}^{|X| \times N_{\tilde{c}}}$ which is independent of target position $x_{j}$. 
Despite this significant reduction in complexity, the dimensionality of $\{\alpha_i\}$ is still very large and further simplifications are mandatory. To this end, existing works \cite{hu2018squeeze, woo2018cbam} replace the input $f$ with a much smaller vector of input\break statistics $s$ that summarizes relevant information from $f$.

For instance, the SE-Net \cite{hu2018squeeze} utilizes global average pooling to produce a vector of channel statistics of $f$, $s^{\mathcal{C}} \in \mathbb{R}^{N_{\tilde{c}}}$, $s^{\mathcal{C}} = \frac{1}{|\mathbb{R}^{d}|}\int_{\mathbb{R}^{d}}f_{\tilde{c}}(x)\,dx$, which is subsequently passed to a small fully-connected network $\varphi^{\mathcal{C}}(\cdot)$ to compute channel attention coefficients $\alpha^{\mathcal{C}} = \{\alpha^{\mathcal{C}}_{\tilde{c}}\}_{\tilde{c}=1}^{N_{\tilde{c}}} = \varphi^{\mathcal{C}}(s^{\mathcal{C}})$. These attention coefficients are then utilized to modulate the corresponding input channels $f_{\tilde{c}}$.

Complementary to channel attention akin to that of the SE-Net, \citet{park2018bam} utilize a similar strategy for spatial attention. Specifically, they utilize channel average pooling to generate a vector of spatial statistics of $f$, $s^{\mathcal{X}} \in \mathbb{R}^{d}$, $s^{\mathcal{X}}=\frac{1}{N_{\tilde{c}}}\sum_{\tilde{c}=1}^{N_{\tilde{c}}}f_{\tilde{c}}(x)$, which is subsequently passed to a small convolutional network $\varphi^{\mathcal{X}}(\cdot)$ to compute spatial attention coefficients $\alpha^{\mathcal{X}}=\{\alpha^{\mathcal{X}}(x)\}_{x \in \mathbb{R}^{2}} = \varphi^{\mathcal{X}}(s^{\mathcal{X}})$. These attention coefficients are then utilized to modulate the corresponding spatial input positions $f(x)$. %\break
Recent works include extra statistical information, e.g., max responses \cite{woo2018cbam}, or replace pooling by convolutions \cite{cao2019gcnet}.

% =====SECTION=====
\section{Attentive Group Equivariant Convolution}
In this section, we propose our generalization of visual self-attention, discuss its properties and relations to prior work. 
\begin{figure}
\floatbox[{\capbeside\thisfloatsetup{capbesideposition={right},capbesidewidth=6.4cm}}]{figure}[\FBwidth]
{\caption{Same colors depict equal weights. The first column of $\mathcal{A^{C}}$ corresponds to $\psi$ and the following ones to $\mathcal{L}_{h}[\psi]$, obtained via cyclic permutations. See how $\{\mathcal{L}_{h}[\psi]\}_{h \in H}$ resembles a circulant matrix. Taken from \citet{Romero2020Co-Attentive}.}\label{fig:circulant}}
{\includegraphics[width=1.35cm]{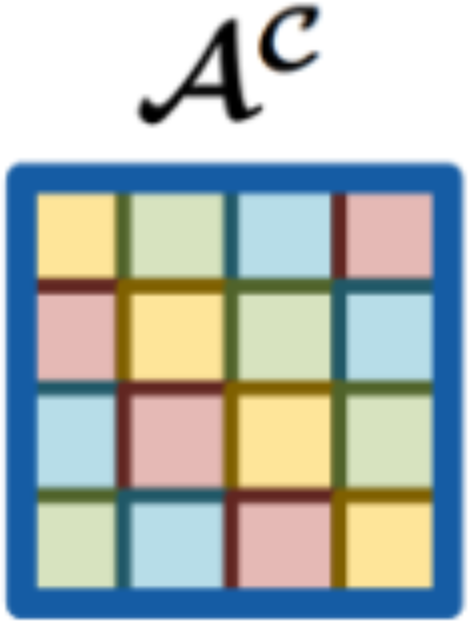}}
\end{figure}

Let $f, \psi: G \rightarrow \mathbb{R}^{N_{\tilde{c}}}$ be a vector valued signal and kernel on $G$, and let $\alpha: G \times G \rightarrow [0,1]^{N_{\tilde{c}}}$ be an \textit{attention map} that takes target and source elements $g,\tilde{g} \in G$, respectively, as input. We define the \textit{attentive group convolution} ($\star_{G}^{\alpha}$) as:
\begin{equation} \label{eq:att_g_conv}
    [f \star^{\alpha}_{G} \psi](g)
    =\sum_{\tilde{c}=1}^{N_{\tilde{c}}}\int_{G}\alpha_{\tilde{c}}(g,\tilde{g})f_{\tilde{c}}(\tilde{g})\mathcal{L}_{g}[\psi_{\tilde{c}}](\tilde{g}) \,{\rm d}\tilde{g}
\end{equation}
with $\alpha = \mathcal{A}[f]$ computed by some \textit{attention operator} $\mathcal{A}$. As such, the attentive group convolution modulates the contributions of group elements $\tilde{g} \in G$ at different channels $\tilde{c} \in [N_{\tilde{c}}]$ during pooling.\footnote{Note that Eq. \ref{eq:att_g_conv} is equal to Eq. \ref{eq:g_conv2} up to a multiplicative factor $\alpha_{\tilde{c}}(g,\tilde{g})^{-1}$, if $\alpha_{\tilde{c}}(g,\tilde{g})$ is constant for every $g,\tilde{g} \in G$, $\tilde{c} \in [N_{\tilde{c}}]$.} The properties and conditions on $\mathcal{A}$ are summarized in Thm.~\ref{thm1}. An extensive motivation as well as its proof are provided in the supplementary material.

\begin{theorem}
\label{thm1}
The attentive group convolution is an equivariant operator if and only if the attention operator $\mathcal{A}$ satisfies:
\begin{equation}
\label{eq:equivconstraint}
\forall_{\overline{g},g,\tilde{g} \in G}: \;\;  \mathcal{A}[\mathcal{L}_{\overline{g}}f](g,\tilde{g}) = \mathcal{A}[f](\overline{g}^{-1} g, \overline{g}^{-1} \tilde{g})
\end{equation}
If, moreover, the maps generated by $\mathcal{A}$ are invariant to one of its arguments, and, hence, exclusively attend to either the input or the output domain (Sec. \ref{sec:att_sequence}), then $\mathcal{A}$ satisfies Eq.~\ref{eq:equivconstraint} iff it is equivariant and thus, based on group convolutions.
\end{theorem}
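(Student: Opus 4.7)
The plan is to establish the equivalence by a direct change-of-variables computation inside the integral and then read off the constraint. Write $T_\psi(f)(g) := [f \star^{\mathcal{A}[f]}_G \psi](g)$ for the attentive group convolution regarded as a (generally nonlinear) operator in $f$, so that equivariance reads $T_\psi(\mathcal{L}_{\overline{g}} f) = \mathcal{L}_{\overline{g}} T_\psi(f)$ for every $\overline{g}\in G$ and all admissible $f,\psi$. First I would expand both sides at arbitrary $g \in G$ via Eq.~\ref{eq:att_g_conv}, using $\mathcal{L}_{\overline{g}}[f \star^{\alpha}_G \psi](g) = [f \star^{\alpha}_G \psi](\overline{g}^{-1}g)$, and then perform the substitution $u = \overline{g}\tilde{g}$ inside the left-hand integral. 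Left-invariance of the Haar measure gives $d\tilde{g} = du$; after the substitution one has $\psi_{\tilde c}(g^{-1}\overline{g}\tilde{g}) = \psi_{\tilde c}(g^{-1}u) = \mathcal{L}_g[\psi_{\tilde c}](u)$ and $f_{\tilde c}(\tilde g) = f_{\tilde c}(\overline{g}^{-1}u) = \mathcal{L}_{\overline{g}}[f_{\tilde c}](u)$, which transforms the LHS into
\begin{equation*}
\sum_{\tilde c}\int_G \mathcal{A}[f]_{\tilde c}\bigl(\overline{g}^{-1}g,\,\overline{g}^{-1}u\bigr)\,\mathcal{L}_{\overline{g}}[f_{\tilde c}](u)\,\mathcal{L}_g[\psi_{\tilde c}](u)\,du,
\end{equation*}
whereas the RHS is identical except that $\mathcal{A}[f]_{\tilde c}(\overline{g}^{-1}g,\overline{g}^{-1}u)$ is replaced by $\mathcal{A}[\mathcal{L}_{\overline{g}}f]_{\tilde c}(g,u)$.

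The next step would be to argue that equality of these two integrals for every $\psi$ forces a pointwise identity. Since $\psi$ is arbitrary and $\mathcal{L}_g[\psi_{\tilde c}]$ sweeps a dense family of test functions, the standard variational lemma applied channel by channel yields
\begin{equation*}
\bigl[\mathcal{A}[f]_{\tilde c}(\overline{g}^{-1}g,\overline{g}^{-1}u) - \mathcal{A}[\mathcal{L}_{\overline{g}}f]_{\tilde c}(g,u)\bigr]\mathcal{L}_{\overline{g}}[f_{\tilde c}](u)=0
\end{equation*}
for a.e.\ $u$. A mild genericity assumption on $f$ (or equivalently requiring the constraint to hold for every $f$ in the domain of $\mathcal{A}$) lets me cancel the nonzero factor $\mathcal{L}_{\overline{g}}[f_{\tilde c}]$ and, after relabelling $u\mapsto\tilde g$, recover Eq.~\ref{eq:equivconstraint}. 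The converse is then obtained by reading this chain of equalities backward: assuming Eq.~\ref{eq:equivconstraint}, the same substitution directly produces $T_\psi(\mathcal{L}_{\overline{g}}f) = \mathcal{L}_{\overline{g}}T_\psi(f)$.

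For the moreover clause, suppose $\mathcal{A}[f](g,\tilde g) = \tilde{\mathcal{A}}[f](\tilde g)$ is independent of the target argument $g$ (the output-attention case is symmetric). Eq.~\ref{eq:equivconstraint} then collapses to $\tilde{\mathcal{A}}[\mathcal{L}_{\overline{g}}f](\tilde g) = \tilde{\mathcal{A}}[f](\overline{g}^{-1}\tilde g) = \mathcal{L}_{\overline{g}}[\tilde{\mathcal{A}}[f]](\tilde g)$, i.e.\ $\tilde{\mathcal{A}}\circ\mathcal{L}_{\overline{g}} = \mathcal{L}_{\overline{g}}\circ\tilde{\mathcal{A}}$, which is precisely standard $G$-equivariance of $\tilde{\mathcal{A}}$; the reverse implication is immediate by substitution. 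Invoking the characterization recalled after Eq.~\ref{eq:group_equiv}---every linear $G$-equivariant map is a reparametrization of a group convolution---identifies the linear part of $\tilde{\mathcal{A}}$ with a group convolution, so admissible attention operators must be built from group convolutions composed with pointwise equivariant nonlinearities.

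The main obstacle will be the cancellation step: stripping $\mathcal{L}_g[\psi_{\tilde c}]$ requires the identity to hold for \emph{all} $\psi$ (which is precisely why the theorem is an iff rather than a one-way implication), and stripping the auxiliary factor $\mathcal{L}_{\overline{g}}[f_{\tilde c}]$ needs the density/genericity argument above. Every remaining step is bookkeeping with left-Haar invariance and the representation law $\mathcal{L}_g\mathcal{L}_h = \mathcal{L}_{gh}$.
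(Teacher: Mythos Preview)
Your proposal is correct and follows essentially the same route as the paper: expand both sides of the equivariance identity, apply the substitution $\tilde{g}\mapsto\overline{g}^{-1}\tilde{g}$ via left-invariance of the Haar measure, and compare integrands pointwise to obtain Eq.~\ref{eq:equivconstraint}; the moreover clause is handled identically by collapsing to a single-argument map and recognizing standard $G$-equivariance. The only difference is bookkeeping---the paper drops $\psi$ from the derivation for simplicity and argues directly from the arbitrariness of $f$, whereas you retain $\psi$ and use it as the test function in the variational step, which is arguably the cleaner way to strip the integral since $\mathcal{A}[f]$ depends on $f$.
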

% \begin{corollary}
% \label{cor1}
% Each attention attention map $\alpha:G\times G \rightarrow [0,1]$ that is left-invariant to either one of arguments, and thus exclusively attends either the input or output domain, satisfies Thm.~\ref{thm1} iff its attention operator is equivariant.
% \end{corollary}
% \begin{proof}
% Proofs of Thm.~\ref{thm1} and Col.~\ref{cor1} are given in the supplementary materials.
% \end{proof}

\subsection{Tying Together Equivariance and Visual Attention} \label{sec:tyingtogether}
Interestingly, and, perhaps in some cases unaware of it, \textit{all} of the visual attention approaches outlined in Section \ref{sec:visual_att}, as well as all of those we are aware of \cite{xu2015show, hu2018squeeze, park2018bam, woo2018cbam, wang2018non, ilse2018attention, hu2019local, ramachandran2019stand, cao2019gcnet, chen2019graph, bello2019attention, lin2019contextgated, diaconu2019affine, Romero2020Co-Attentive} \textit{exclusively utilize translation (or group) equivariance preserving maps for the generation of the attention coefficients and, hence, constitute altogether group equivariant networks by which they satisfy Thm.~\ref{thm1}}.

As will be explained in the following sections, all these works resemble special cases of Eq.~\ref{eq:att_g_conv} by substituting $G$ with the corresponding group and modifying the specifications about how $\alpha$ is calculated (Sec. \ref{sec:equiv_attcoefs} - \ref{sec:att_sequence}).

% =====SubSECTION=====
\subsubsection{Translation Equivariant Visual Atention}
Since convolutions as well as popular pooling operations are translation equivariant, the visual attention approaches outlined in Sec. \ref{sec:visual_att} are translation equivariant as well.\footnote{In fact, conventional pooling operations (e.g., max, average) can be written as combinations of convolutions and pointwise non-linearities, which are translation equivariant, as well.} 
One particular case worth emphasising is that of SE-Nets. Here, a fully-connected network $\varphi^{\mathcal{C}}$, a non-translation equivariant map, is used to generate the channel attention coefficients $\alpha^{\mathcal{C}}$. However, $\varphi^{\mathcal{C}}$ \textit{is} indeed translation equivariant.\break Recall that $\varphi^{\mathcal{C}}$ receives $s^{\mathcal{C}}$ as input, a signal obtained via global average pooling (a convolution-like operation). Resultantly, $s^{\mathcal{C}}$ can be interpreted as a $\mathbb{R}^{N_{\tilde{c}} \times 1 \times 1}$ tensor and hence, applying a fully connected layer to $s^{\mathcal{C}}$ equals a pointwise convolution between $s^{\mathcal{C}}$ and a filter $\psi_{\text{fully}}\in \mathbb{R}^{N_o \times N_{\tilde{c}} \times 1 \times 1}$ with $N_o$ output channels.\footnote{This resembles a depth-wise separable convolution \cite{chollet2017xception} with the first convolution given by global average pooling.} 

% =====SubSECTION=====
\subsubsection{Group Equivariant Visual Attention}
To the best of our knowledge, the only work that provides a group theoretical approach towards visual attention  is that of \citet{Romero2020Co-Attentive}. Here, the authors consider affine groups $G$ with elements $g = (x,h)$, $x \in \mathbb{R}^{d}$, $h \in H$ and cyclic permutation groups $H$. Consequently, they utilize a cyclic permutation equivariant map, $\varphi^{\mathcal{H}}(\cdot)$, to generate attention coefficients $\alpha^{\mathcal{H}}(h)$, $h \in H$, with which the corresponding elements $h$ are modulated. As a result, their proposed attention strategy is $H$-equivariant. 
To preserve translation equivariance, and hence, $G$-equivariance, $\varphi^{\mathcal{H}}$ is re-utilized at every spatial position $x \in \mathbb{R}^{d}$. This is equivalent to combining $\varphi^{\mathcal{H}}$ with a pointwise filter on $\mathbb{R}^{d}$.
\citet{Romero2020Co-Attentive} found that equivariance to cyclic groups $H$, can \textit{only} be achieved by constraining $\varphi^{\mathcal{H}}$ to have a \textit{circulant structure}. This is equivalent to a convolution with a filter $\psi$, whose group representations $\mathcal{L}_{h}$ induce cyclical permutations of itself (Fig. \ref{fig:circulant}) and hence, resembles a group convolution, by which Thm.~\ref{thm1} is satisfied.

The work of \citet{Romero2020Co-Attentive} exclusively performs attention on the $h$ component of the group elements $g = (x,h) \in G$ and is only defined for (block) cyclic groups. Consequently, it does not consider spatial relationships during attention (Fig. \ref{fig:picasso}) and is not applicable to general groups. Conversely, our proposed framework allows for simultaneous attention on both components of the group\break elements $ g = (x, h)$ in a $G$ equivariance preserving manner.

% =====subSECTION=====
\subsection{Efficient Group Equivariant Attention Maps}\label{sec:equiv_attcoefs}
Attentive group convolutions impose the generation of an additional attention map $\alpha: G \times G \rightarrow [0,1]^{N_{\tilde{c}}}$, which is computationally demanding. To reduce this computational burden, we exploit the fact that visual data is defined on $\mathbb{R}^{d}$ and, hence, relevant groups are affine, to provide an efficient factorization of the attention map $\alpha$. 

In Sec.~\ref{sec:visual_att} we indicated that attention coefficients $\alpha$ can be\break equivariantly factorized into spatial and channel components. We build upon this idea and factorize attention via:
$$
\alpha_{\tilde{c}}(g,\tilde{g}):=\alpha^{\mathcal{X}}(({x},h),(\tilde{{x}},\tilde{h}))\alpha_{\tilde{c}}^{\mathcal{C}}(h,\tilde{h})
$$
where $\alpha^{\mathcal{X}}$ attends for spatial relations without considering channel characteristics and $\alpha^{\mathcal{C}}$ attends for patterns in the channel- and $H$-axis, but ignores spatial patterns. We thus factorize $\alpha$ into a \emph{spatial attention map} $\alpha^{\mathcal{X}}: G \times G \rightarrow [0,1]$ and a \emph{channel attention map} $\alpha^{\mathcal{C}}: H \times H \rightarrow [0,1]^{N_{\tilde{c}}}$.\break
Findings in literature have shown that, for visual data, attention maps are almost equivalent for different query positions and thus, only query-independent dependencies are learnt \cite{cao2019gcnet,zhu2019empirical}. Based on this observation, we further simplify $\alpha^{\mathcal{X}}$ to be invariant over spatial positions either at the input or output space. Since separate convolutional filters $\psi$ could possibly benefit from different attention maps, we omit spatial positions in the input space (see Sec.~\ref{sec:att_operator} for details). In other words, we replace $\alpha^{\mathcal{X}}(g,\tilde{g})$ with $\alpha^{\mathcal{X}}(g,\tilde{h})$, an spatial position invariant attention map over the input space: $\alpha^{\mathcal{X}}: G \times H \rightarrow [0,1]$.

Conveniently, attention coefficients of type $\alpha: \mathbb{R}^d \times H \rightarrow [0,1]^{N_{\tilde{c}}}$ can be interpreted as functions on $\mathbb{R}^d$ with pointwise visualizations $\tilde{x} \mapsto \alpha(\tilde{x},\tilde{h})$ for each $\tilde{x}\in \mathbb{R}^{d}$. Resultantly, we are able to aid the interpretability of the learned concepts and of the attended symmetries (e.g., Figs.~\ref{fig:examples}, \ref{fig:pcam_examples}, \ref{fig:good_examples}).
% Conveniently, attention coefficients of type $\alpha_{\tilde{c}}: \mathbb{R}^d \rtimes H \rightarrow [0,1]$ can be visualized as functions on $\mathbb{R}^d$ (e.g., with for each $h$ a point-wise visualizations of $\tilde{x} \mapsto \alpha_{\tilde{c}}(\tilde{x},\tilde{h})$) which aids the interpretability of the learned concepts and attended symmetries (Fig.~\ref{fig:examples}).
% In channel attention, however, we choose to focus attention on both input and target poses $h$ and $\tilde{h}$, and in the following subsection design a particular attention network which is based on that of \citet{woo2018cbam}, but which guarantees that the equivariance condition of Thm.~\ref{thm1} is satisfied. In Sec.~\ref{sec:att_sequence} we consider a simplification hereof that only depends on $\tilde{h}$ and thereby significantly reduces cost of computation.

% =====subSECTION=====
\subsubsection{The Attention Operator $\mathcal{A}$}\label{sec:att_operator}
Recall that the attention map $\alpha$ is computed via an attention operator $\mathcal{A}$. In the most general case, $\alpha$ and, hence $\mathcal{A}$, is a function of both the input signal $f$ and the filter $\psi$. In order to define $\mathcal{A}$ as such, we generalize the approach of \citet{woo2018cbam} such that: (1) equivariance to general symmetry groups is preserved and (2) the attention maps depend on the filter $\psi$ as well.
\begin{figure}[t]
    \centering
    \includegraphics[width=0.99\textwidth]{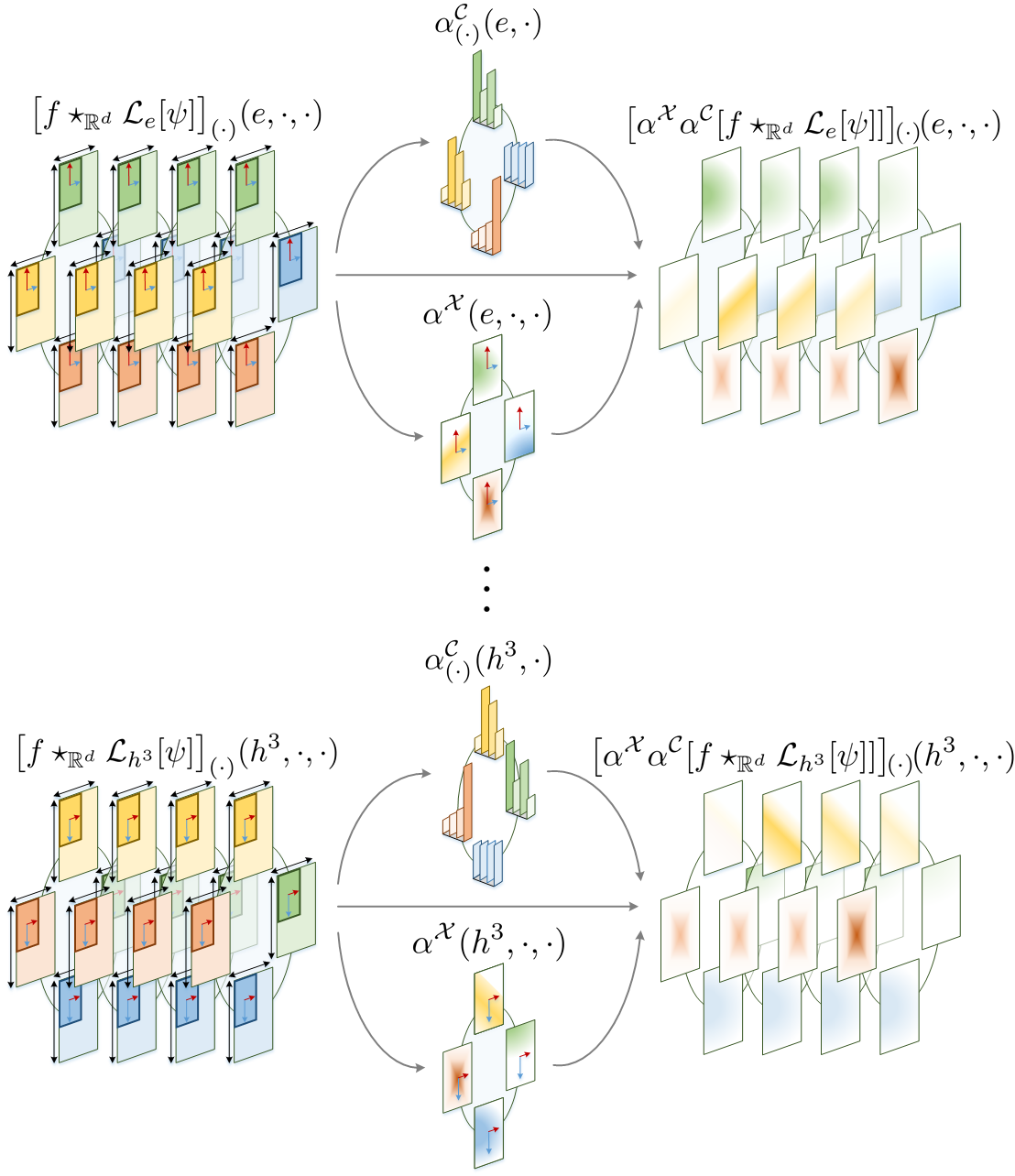}
    \vskip -4mm
    \caption{Attentive group convolution on the roto-translation group $SE(2)$. In contrast to group convolutions (Fig. \ref{fig:affine_conv}, Eq. \ref{eq:gconv_groupstruct_smallconvs}), attentive group convolutions utilize channel $\alpha^{\mathcal{C}}$ and spatial $\alpha^{\mathcal{X}}$ attention to modulate the intermediary convolutional responses $[f \star_{\mathbb{R}^{2}} \mathcal{L}_{h}[\psi]]$ before pooling over the $\tilde{c}$ and $\tilde{h}$ axes.}
    \label{fig:att_affine_conv}
\end{figure}

Let $\phi^{\mathcal{C}}: \tilde{f} \mapsto s^{\mathcal{C}} = \{s^{\mathcal{C}}_{\text{avg}}, s^{\mathcal{C}}_{\text{max}}\}$, $s^{\mathcal{C}}_{i}: H \times H \rightarrow \mathbb{R}^{N_{\tilde{c}}}$ and\break $\phi^{\mathcal{X}}: \tilde{f} \mapsto s^{\mathcal{X}} = \{s^{\mathcal{X}}_{\text{avg}}, s^{\mathcal{X}}_{\text{max}}\}$, $s^{\mathcal{X}}_{i}: G \times G \rightarrow \mathbb{R}$ be functions that generate channel ($s^{\mathcal{C}}$) and spatial statistics ($s^{\mathcal{X}}$),\break respectively, from an intermediary vector valued signal $\tilde{f}: G \times G \rightarrow \mathbb{R}^{N_{\tilde{c}}}$ containing information both from the input and output spaces. Analogously to \citet{woo2018cbam}, we compute spatial and channel statistics to reduce the dimensionality of the input. However, in contrast to them, we compute these statistics from intermediary convolutional maps $\tilde{f}$ rather than from the input signal $f$ directly.\footnote{This is why the statistics $s^{\mathcal{C}}_{i}$, $s^{\mathcal{X}}_{i}$ receive tuples $(h, \tilde{h})$, $(g, \tilde{g})$, respectively, as input, as opposed to single argument inputs which often emerge in several prior works on visual attention.} As a result,\break we take the influence of the filter $\psi$ into account during the computation of the attention maps. Following the simplifications proposed in Sec.~\ref{sec:equiv_attcoefs} for $\alpha^{\mathcal{X}}$, we can further reduce $s^{\mathcal{X}}_{i}$ and $\tilde{f}$ to functions of the form $s^{\mathcal{X}}_{i}: G \times H \rightarrow \mathbb{R}$ and $\tilde{f}: G \times H \rightarrow \mathbb{R}^{N_{\tilde{c}}}$, respectively. Consequently, we define:
\begin{equation}\label{eq:def_ftilde}
    \tilde{f} = \{\tilde{f}_{\tilde{c}}\}_{\tilde{c}=1}^{N_{\tilde{c}}}, \ \ \tilde{f}_{\tilde{c}}(x,h,\tilde{h}) := \big[f_{\tilde{c}} \star_{\mathbb{R}^{d}} \mathcal{L}_{h}[\psi_{\tilde{c}}]\big](x, \tilde{h}),
\end{equation} 
which is the intermediary result of the convolution between the input $f$ and the $h$-transformation of the filter $\psi$, $\mathcal{L}_{h}[\psi]$ before pooling over $\tilde{c}$ and $\tilde{h}$ (Fig. \ref{fig:att_affine_conv}, Eq. \ref{eq:gconv_groupstruct_smallconvs}).

\textbf{Channel Attention.} Let $\varphi^{\mathcal{C}}: s^{\mathcal{C}} \mapsto \alpha^{\mathcal{C}}$ be a function that generates a channel attention map $\alpha^{\mathcal{C}}: H \times H \rightarrow [0,1]^{N_{\tilde{c}}}$ from a vector of channel statistics $s^{\mathcal{C}}: H \times H \rightarrow \mathbb{R}^{N_{\tilde{c}}}$ of the intermediate representation $\tilde{f}$. Our channel attention computation is analogous to that of \citet{woo2018cbam} based on two fully connected layers. However, in our case, each linear layer is parametrized by a \textit{matrix-valued kernel} $\mathbf{W}_i:H \rightarrow \mathbb{R}^{N_{out} \times N_{in}}$, which we shift via left-regular representations $\mathcal{L}_h\left[\mathbf{W}_i\right](\tilde{h}) = \mathbf{W}_i(h^{-1} \tilde{h})$ in order to guarantee equivariance (Thm.~\ref{thm1}):
\begin{align}
    \alpha^{\mathcal{C}}(h,\tilde{h})&=\varphi^{\mathcal{C}}\left[s^{\mathcal{C}}\right](h,\tilde{h}) \label{eq:compute_ak_2}\\
    &\hspace{-1cm} = \sigma\Big(\big[
    \mathbf{W}_{2}(h^{-1}\tilde{h})\cdot[\mathbf{W}_{1}(h^{-1}\tilde{h})\cdot s^{\mathcal{C}}_{ \text{avg}}(h,\tilde{h})]^{+}
    \big] \nonumber  \\[-2\jot]
   &\hspace{0.45cm}
   + \big[
    \mathbf{W}_{2}(h^{-1}\tilde{h})\cdot[\mathbf{W}_{1}(h^{-1}\tilde{h})\cdot s^{\mathcal{C}}_{ \text{max}}(h,\tilde{h})]^{+}
    \big] \Big) \nonumber
\end{align}
with $[\cdot]^{+}$ the ReLU function, $\sigma$ the sigmoid function, $r$ a reduction ratio and $\mathbf{W}_{1}: H \rightarrow \mathbb{R}^{\frac{N_{\tilde{c}}}{r} \times N_{\tilde{c}}}$, $\mathbf{W}_{2}: H \rightarrow \mathbb{R}^{N_{\tilde{c}} \times \frac{N_{\tilde{c}}}{r}}$ filters defined on $H$.

\textbf{Spatial Attention.} Let $\varphi^{\mathcal{X}}: s^{\mathcal{X}} \mapsto \alpha^{\mathcal{X}}$ be a function that generates a spatial attention map $\alpha^{\mathcal{X}}: G \times H \rightarrow [0,1]$ from channel statistics $s^{\mathcal{X}}:G \times H \rightarrow \mathbb{R}^2$, in which per input $\tilde{h} \in H$ and output $g \in G$, the mean and max value is taken over the channel axis. Similarly to \citet{woo2018cbam}, spatial attention $\alpha^{\mathcal{X}}$ is then defined as:
\begin{align}
   \alpha^{\mathcal{X}}(x, h, \tilde{h}) &= \varphi^{\mathcal{X}}(s^{\mathcal{X}})(x, h, \tilde{h})\nonumber \\ \label{eq:compute_ax} &= \sigma\left(\big[s^{\mathcal{X}} \star_{\mathbb{R}^{d}} \mathcal{L}_{h}[ \psi^{\mathcal{X}}]\big]\right)(x, \tilde{h})
\end{align}
with $\psi^{\mathcal{X}}: G \rightarrow \mathbb{R}^{2}$ a group convolutional filter.

\textbf{Full Attention.} \citet{woo2018cbam} carried out extensive experiments to find the best performing configuration to combine channel and spatial attention maps for the $\mathbb{R}^{d}$ case, e.g., in parallel, serially starting with channel attention, serially starting with spatial attention. Based on their results we adopt their best performing configuration, i.e., \textit{serially starting with channel attention}, for the $G$ case (Fig. \ref{fig:attention_branch}). 

Recall that $\tilde{f}$ is the intermediary result from the convolution between the input $f$ and the $h$-transformation of the filter $\psi$ before pooling over $\tilde{c}$ and $\tilde{h}$. We perform attention on top of $\tilde{f}$ (Fig. \ref{fig:attention_branch}), where $\alpha^{\mathcal{C}}$ and $\alpha^{\mathcal{X}}$ are computed by Eqs. \ref{eq:compute_ak_2}, \ref{eq:compute_ax}, respectively. Resultantly, the attentive group convolution is computed as:
% \begin{align}
% &[f \star^{\alpha}_{G} \psi](x, h)\nonumber\\
% &= \sum_{\tilde{c}=1}^{N_{\tilde{c}}}\int_H \Big[\mathcal{L}_{h}[\alpha^{\mathcal{X}}]\mathcal{L}_{h}[\alpha^{\mathcal{C}}]\big[f_{k} \star_{\mathbb{R}^{d}} \mathcal{L}_{h}[\psi_{k}]\big]\Big](x,\tilde{h})\,d\tilde{h}
% \end{align}
\begin{align}
[f \star^{\alpha}_{G} \psi](x, h)=
\sum_{\tilde{c}=1}^{N_{\tilde{c}}}\int_H \alpha^{\mathcal{X}}&(x, h, \tilde{h})\nonumber\\[-3\jot]
&\alpha_{\tilde{c}}^{\mathcal{C}}(h,\tilde{h})\tilde{f}(x,h,\tilde{h})\,{\rm d}\tilde{h} \label{eq:full_att}
\end{align}

\subsection{The Residual Attention Branch}\label{sec:res_att_branch}
Based on the findings of \citet{he2016deep}, several visual attention approaches propose to utilize residual blocks with direct connections during the course of attention to facilitate gradient flow \cite{hu2018squeeze, park2018bam, woo2018cbam, wang2018non, cao2019gcnet}. However, these approaches calculate the final attention map $\alpha^{+}$ as the sum of the direct connection $\boldsymbol{1}$ and the attention map obtained from the attention branch $\alpha$, i.e., $\alpha^{+}=\boldsymbol{1} + \alpha$. Consequently, the obtained attention map $\alpha^{+}: \mathbb{R}^{2} \rightarrow [1,2]^{N_{c}}$ is \textit{restricted} to the interval $[1, 2]$ and the network loses its ability to suppress input components. Inspired by the aforementioned works, we propose to calculate attention in what we call a \textit{residual attention branch} (Fig. \ref{fig:attention_branch}). Specifically, we utilize the attention branch to calculate a \textit{residual attention map} defined as $ \alpha^{-} = ( \boldsymbol{1} - \alpha^{+})$; $\alpha^{-}: G \times G \rightarrow [0,1]$. Next, we subtract the residual attention map $\alpha^{-}$ from the direct connection $\boldsymbol{1}$ to obtain the resultant attention map $\alpha^{+}$, i.e., $\alpha^{+} = \boldsymbol{1} - \alpha^{-}$. As a result, we are able to produce attention maps $\alpha^{+}$ that span the $[0,1]$ interval while preserving the benefits of the direct connections of \citet{he2016deep}.
% =====subSECTION=====
%\vspace{-8mm}
\subsection{The Attentive Group Convolution as a Sequence of Group Convolutions and Pointwise Non-linearities}
\label{sec:att_sequence}
\begin{figure}[t]
    \centering
    \includegraphics[width=0.9\textwidth]{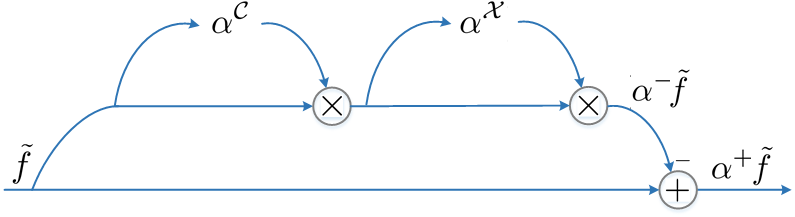}
    \vskip -6mm
    \caption{Sequential channel and spatial attention performed on a residual attention branch (Sec. \ref{sec:res_att_branch}).}
    \label{fig:attention_branch}
\end{figure}
CNNs are usually organized in layers and hence, the input $f$ is usually convolved in parallel with a set of $N_o$ filters $\{\psi_{o}\}_{o=1}^{N_o}$. As outlined in the previous section, this implies that the attention maps can change as a function of the current filter $\psi_{o}$. One assumption broadly utilized in visual attention is that these maps do not depend on the filters $\{\psi_{o}\}_{o=1}^{N_o}$, and, hence, that $\alpha$ is a sole function of the input signal $f$ \cite{hu2018squeeze, park2018bam, woo2018cbam, diaconu2019affine, Romero2020Co-Attentive}. Consequently, the attention coefficients $\alpha$ are reduced from a function $\alpha: G \times G \rightarrow [0,1]^{N_{\tilde{c}}}$ (c.f.,~Eq.~\ref{eq:att_g_conv}) to a function $\alpha: G \rightarrow [0,1]^{N_{\tilde{c}}}$. In other words, attention becomes only dependent on $g$ (see Eqs. \ref{eq:compute_ak_2}-\ref{eq:full_att}) and thus, the generation of the attention maps $\alpha^{\mathcal{C}}$, $\alpha^{\mathcal{X}}$ can be shifted to the input feature map $f$. 
Resultantly, the attentive group convolution is reduced to a sequence of conventional group convolutions and point-wise non-linearities (Thm.~\ref{thm1}), which further reduces the computational cost of attention:
\begin{equation}
\label{eq:sec_attention}
    [f \star^{\alpha}_{G} \psi] = [f^{\alpha} \star_{G} \psi] = [(\alpha^{\mathcal{X}}\alpha^{\mathcal{C}}f) \star_{G} \psi]
\end{equation}
% =====SECTION=====
\section{Experiments}
We validate our approach by exploring the effects of using attentive group convolutions in contrast to conventional ones.\break We compare the conventional group equivariant networks $p4$- and $p4m$-CNNs of \citet{GCNN} on the rotated MNIST and CIFAR-10 datasets with their corresponding attentive counterparts: $\alpha$-$p4$-CNNs and $\alpha$-$p4m$-CNNs, respectively; and the $p4$- and $p4m$-DenseNets of \citet{veeling2018rotation} on the PCam dataset with their corresponding attentive counterparts: $\alpha$-$p4$-DenseNet and $\alpha$-$p4m$- CNNs and DenseNets, respectively. Additionally, we explore the effects of only applying channel attention (e.g., $\alpha_{\text{CH}}$-$p4$-CNNs), spatial attention (e.g., $\alpha_{\text{SP}} $-$p4$-CNNs) and applying attention directly on the input (e.g., $\alpha_{F}$-$p4$-CNNs).\footnote{Our code is publicly available at:\\ \url{ https://github.com/dwromero/att_gconvs}}

We notice that the network architectures in \citet{GCNN} and \citet{Romero2020Co-Attentive} used for the CIFAR-10 experiments are equivariant only approximately. This results from using odd-sized convolutional kernels with stride $\geq$ 1 on even-sized feature maps (see Appx.~\ref{sec:approx_equiv} for a complete discussion). Since this effect distorts the equivariance property of our equivariant attention maps, i.e., they also become equivariant only approximately (Figs.~\ref{fig:bad_examples}, \ref{fig:good_examples}), this issue must be fixed. We achieve this by replacing strided convolutions in such regimes by conventional convolutions followed by a max-pooling layer.
%We exclude \citet{Romero2020Co-Attentive} from the experiments since it is not applicable to general groups and several of its experiments are carried out in an approximately equivariant regime (see Appx. \ref{sec:approx_equiv}). Furthermore, we explore the effects of only applying channel attention (e.g., $\alpha_{\text{CH}}$-$p4$-CNNs), spatial attention (e.g., $\alpha_{\text{SP}} $-$p4$-CNNs) and applying attention directly on the input (e.g., $\alpha_{F}$-$p4$-CNNs). 

For all our experiments we replicate as close as possible the training and evaluation strategies of the corresponding baselines, replace approximately equivariant networks by exact equivariant ones, and initialize any additional parameter in the same way as the corresponding baseline. Extended implementation details are provided in Appx.~\ref{appx:extended_details}.

\subsection{rot-MNIST}
The rotated MNIST dataset \cite{larochelle2007empirical} contains 62$k$ gray-scale 28x28 handwritten digits uniformly rotated for $[0, 2\pi)$. The dataset is split into training, validation and test sets of 10$k$, 2$k$ and 50$k$ images respectively. We compare $p4$-CNNs with all the corresponding attention variants previously mentioned. For our attention models, we utilize a filter size of $7$ and a reduction ratio $r$ of $2$ on the attention branch. Since attentive group convolutions impose the learning of additional parameters, we also instantiate bigger $p4$-CNNs by increasing the number of channels uniformly at every layer to roughly match the number of parameters of\break the attentive versions. Furthermore, we compare our results with comparative attentive versions as defined in \citet{Romero2020Co-Attentive} ($\alpha_{\text{RH}}$), which perform attention exclusively over the axis of rotations. Our results show that (1) attentive versions consistently outperform non-attentive ones, and that (2) performing attention over the entire group is beneficial in terms of classification accuracy (Tab. \ref{tab:rot_mnist}).

\subsection{CIFAR-10} The CIFAR-10 dataset \cite{krizhevsky2009learning} consists of 60$k$ real-world 32x32 RGB images uniformly drawn from 10 classes. The dataset is split into training, validation and test sets of $40k$, $10k$ and $10k$ images, respectively. We compare the $p4$ and $p4m$ versions of the All-CNN \cite{springenberg2014striving} and the Resnet44 \cite{he2016deep} in \citet{GCNN} with attentive variations. For all our attention models, we utilize a filter size of $7$ and a reduction ratio $r$ of $16$ on the attention branch. Unfortunately, attentive group convolutions impose an unfeasible increment on the memory requirements for this dataset.\footnote{the $\alpha$-$p4$ All-CNN requires approx. 72GB of CUDA memory, as opposed to 5GBs for the $p4$-All-CNN. This is due to the storage of the intermediary convolution responses required for the calculation of the attention weights (Eqs.~\ref{eq:compute_ak_2}- \ref{eq:full_att})} Resultantly, we are only able to compare the $\alpha_{\text{F}}$ variations of the corresponding networks. Our results show that attentive $\alpha_{\text{F}}$ networks consistently outperform non-attentive ones (Tab. \ref{tab:cifar}). Moreover, we demonstrate that our proposed networks focus on relevant parts of the input and that the predicted attention maps behave equivariantly for group symmetries (Figs.~\ref{fig:examples}, \ref{fig:good_examples}).

% ==== Figure =====
\begin{figure}
    \centering
    \hspace{0.25mm}
    \begin{subfigure}{0.3\textwidth}
        \includegraphics[width=\textwidth]{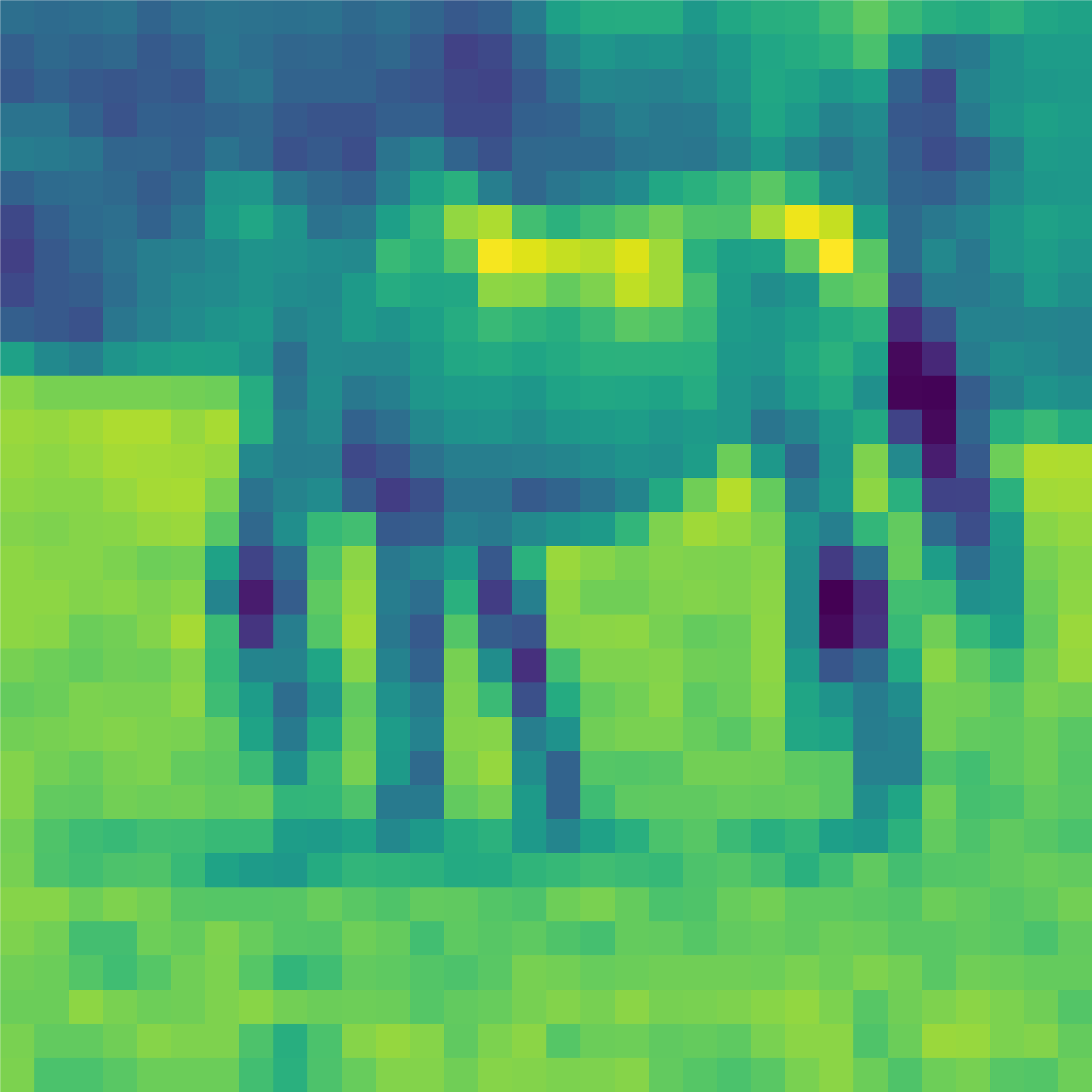}
    \end{subfigure}
    \quad
    \begin{subfigure}{0.3\textwidth}
        \includegraphics[width=\textwidth]{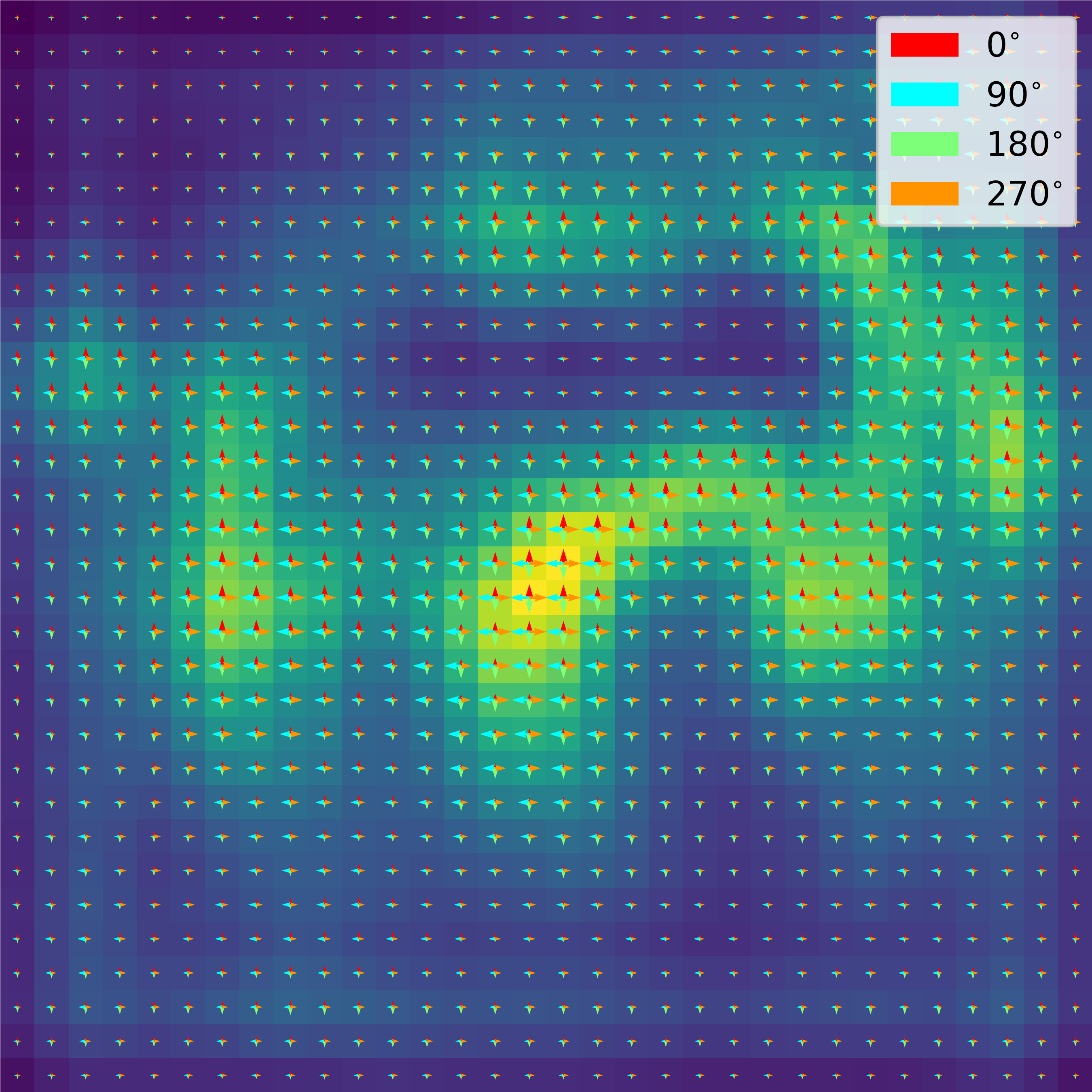}
    \end{subfigure}
    \vspace{0.3cm}

    \begin{subfigure}{0.3\textwidth}
        \includegraphics[width=\textwidth]{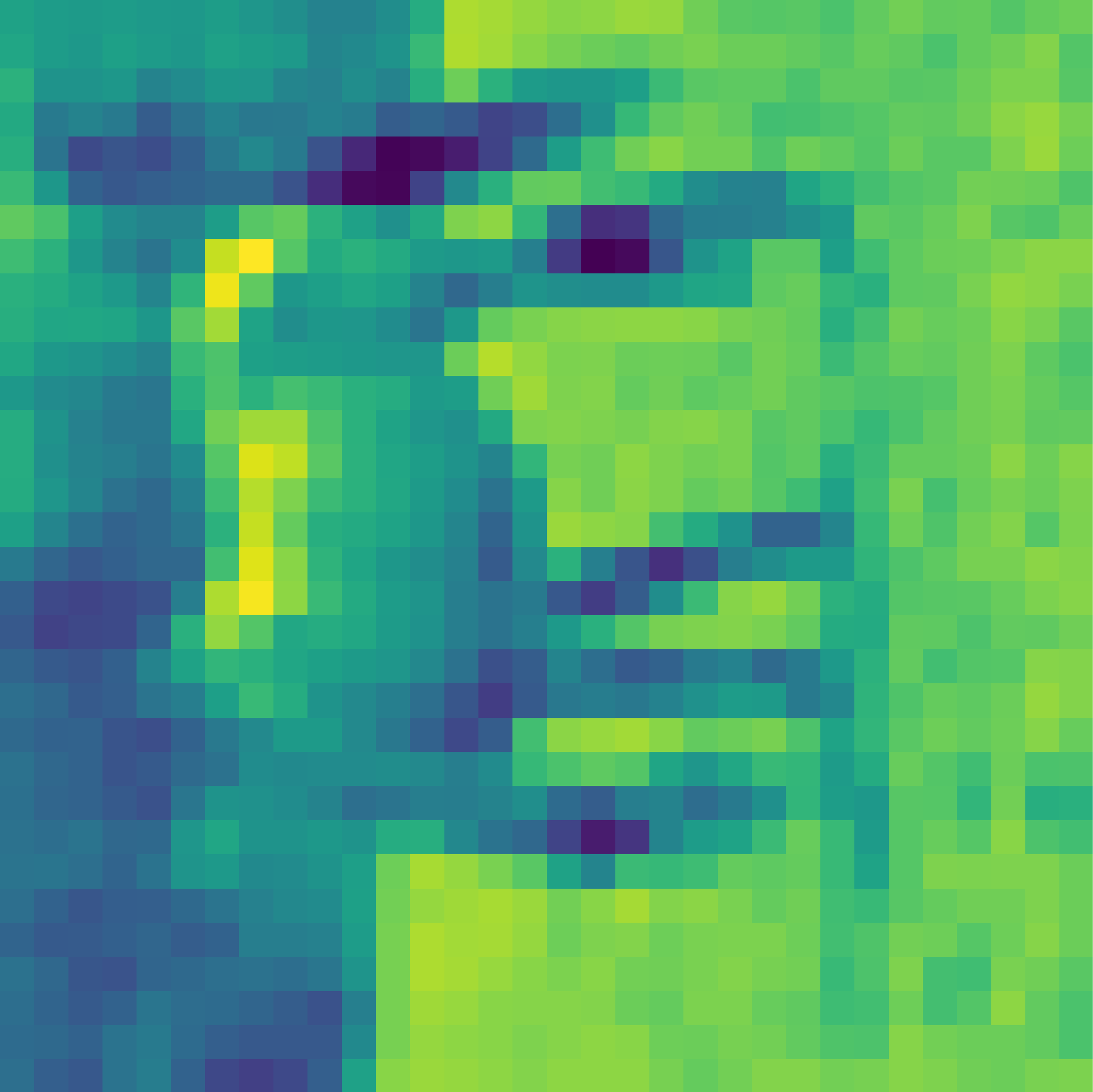}
    \end{subfigure}
    \quad
    \begin{subfigure}{0.3\textwidth}
        \includegraphics[width=\textwidth]{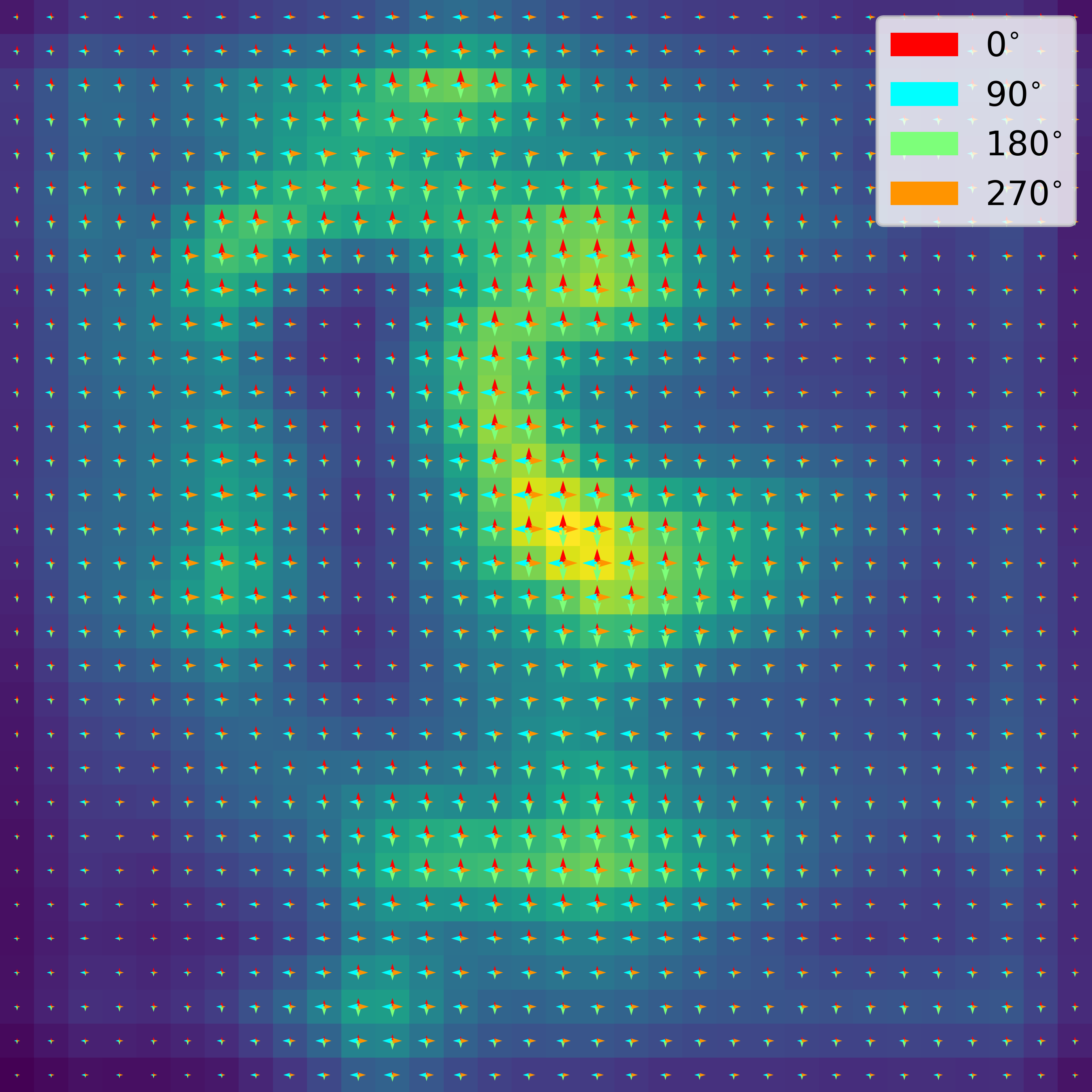}
    \end{subfigure}
    \vskip -4mm
    \caption{Equivariant attention maps on the roto-translation group $SE(2)$. The predicted attention maps behave equivariantly for group symmetries. The arrows depict the strength of the filter responses at the corresponding orientations throughout the network.}\label{fig:examples}
\end{figure}
%  -=== Table
\begin{table}[t]
\centering
\caption{Test error rates on rot-MNIST (with standard deviation under 5 random seed variations).}
\label{tab:rot_mnist}
\begin{center}
\vskip -3mm
\begin{small}
\begin{sc}
\scalebox{0.85}{
\begin{tabular}{rcc}
\toprule
Network & Test Error (\%) & Param.\\
\toprule
$p4$-CNN & 2.048 $\pm$ 0.045& 24.61k\\
$\alpha_{\text{RH}}$-$p4$-CNN & 1.980 $\pm$ 0.032 & 24.85k\\
\midrule
big$_{19}$-$p4$-CNN & 1.796 $\pm$ 0.035 & 77.54k\\
$\alpha$-$p4$-CNN & \textbf{1.696 $\pm$ 0.021} &  73.13k\\
\midrule
big$_{15}$-$p4$-CNN & 1.848 $\pm$ 0.019 & 50.42k\\
$\alpha_{\text{ch}}$-$p4$-CNN & \textbf{1.825 $\pm$ 0.048}& 48.63k \\
$\alpha_{\text{sp}}$-$p4$-CNN & \textbf{1.761 $\pm$ 0.027} & 49.11k \\
\midrule
big$_{11}$-$p4$-CNN & 1.996 $\pm$ 0.083  & 29.05k \\
$\alpha_{\text{f}}$-$p4$-CNN & \textbf{1.795 $\pm$ 0.028} & 29.46k \\
\bottomrule
\end{tabular}}
\end{sc}
\end{small}
\end{center}
\vskip -0.25in
\end{table}
\begin{table}[t!]
\caption{Test error rates on CIFAR10 and augmented CIFAR10+.}
\vskip -3mm
\label{tab:cifar}
\begin{center}
\begin{small}
\begin{sc}
\scalebox{0.85}{
\begin{tabular}{r|cccc}
\toprule
Network & Type & CIFAR10 & CIFAR10+ & Param.\\
\toprule
\multirow{4}{*}{All-CNN} & $p4$ & 9.32 & 8.91 & 1.37M\\
& $\alpha_{\text{f}}$-$p4$ &\textbf{ 8.8} & \textbf{7.05} & 1.40M\\
& $p4m$ & 7.61 & 7.48 & 1.22M\\
& $\alpha_{\text{f}}$-$p4m$ & \textbf{6.93} & \textbf{6.53} & 1.25M\\
\midrule
\multirow{2}{*}{ResNet44} & $p4m$ & 15.72& 15.4 & 2.62M\\
& $\alpha_{\text{f}}$-$p4m$ &\textbf{ 10.82 }& \textbf{10.12} & 2.70M\\
\bottomrule
\end{tabular}}
\end{sc}
\end{small}
\end{center}
\vskip -0.2in
\end{table}
\subsection{PCam} The PatchCamelyon dataset \cite{veeling2018rotation} consists of 327$k$ 96x96 RGB image patches of tumorous/non-tumorous breast tissues extracted from the Camelyon16 dataset \cite{bejnordi2017diagnostic}, where each patch was labelled as tumorous if the central region (32x32) contained at least one tumour pixel as given by the original annotation in \citet{bejnordi2017diagnostic}. We compare the $p4$ and $p4m$ versions of the DenseNet \cite{huang2017densely} in \citet{veeling2018rotation} with attentive variants. For all our attention models, we utilize a filter size of $7$ and a reduction ratio $r$ of 16 on the attention branch. Similarly to the CIFAR-10 case, we restrict our experiments to $\alpha_{\text{F}}$ attentive networks due to computational constraints. Our results show that attentive $\alpha_{\text{F}}$ consistently outperform non-attentive ones (Tab.~\ref{tab:pcam}). Interestingly, the $\alpha_{\text{F}}$-$p4$-DenseNet is already able to outperform the $p4m$-DenseNet without attention. Surprisingly, our equivariant attention maps reveal that the network learns to focus on the nuclei of the cells and to removes background elements during inference, all of this in a group equivariant way (Fig.~\ref{fig:pcam_examples}).

\begin{figure}
    \centering
    \hspace{0.25mm}
    \begin{subfigure}{0.3\textwidth}
        \includegraphics[width=\textwidth]{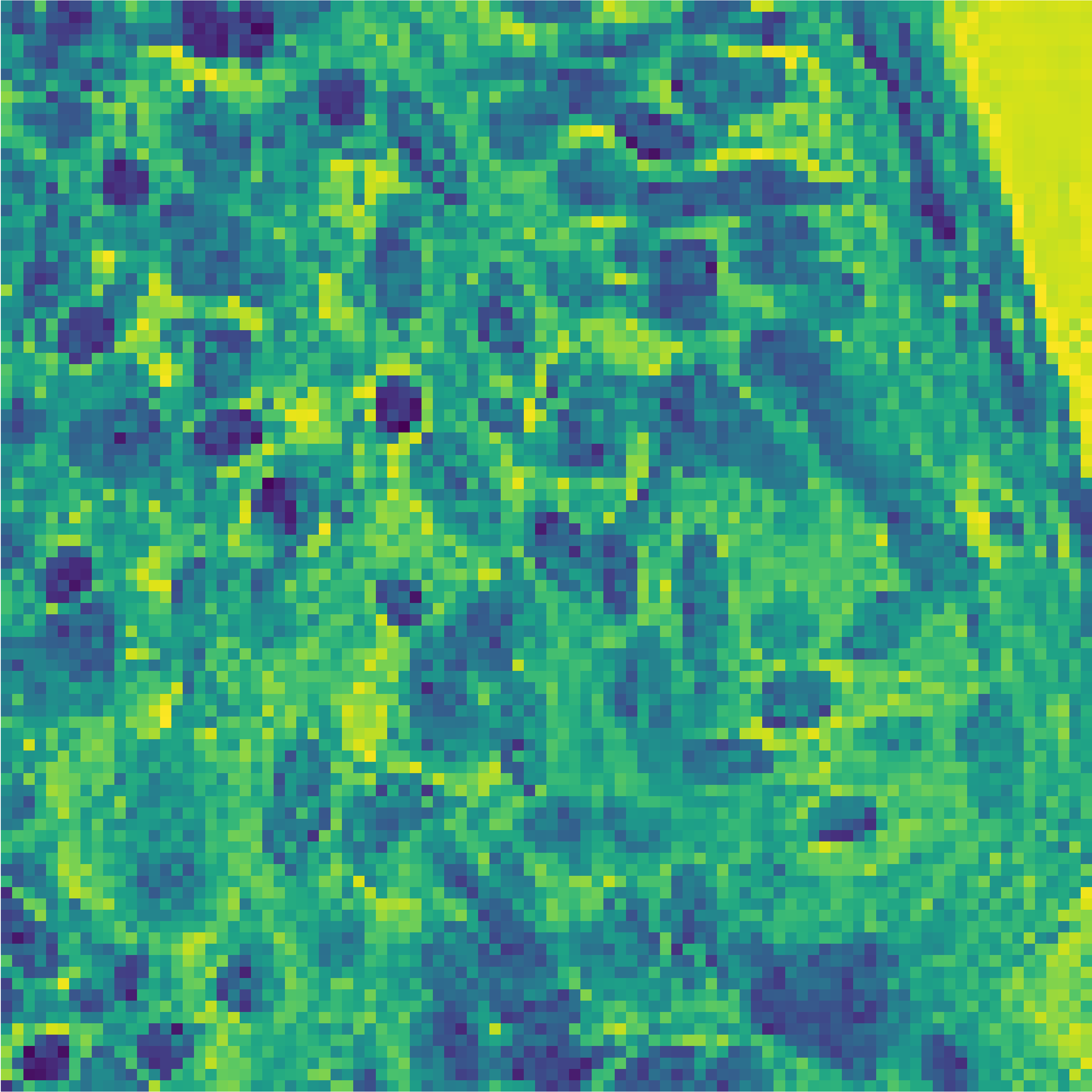}
    \end{subfigure}
    \quad
    \begin{subfigure}{0.3\textwidth}
        \includegraphics[width=\textwidth]{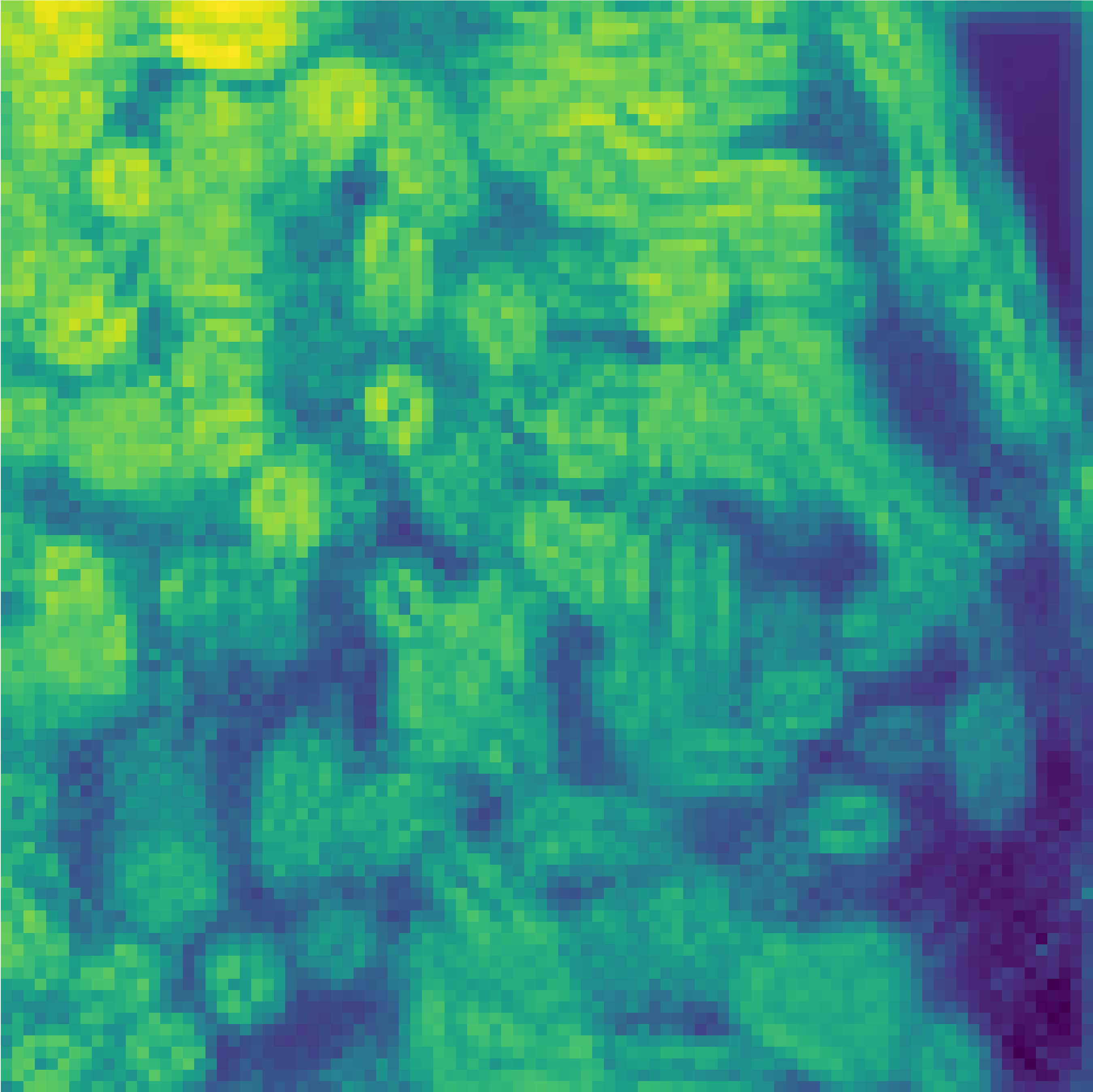}
    \end{subfigure}
    \vspace{0.3cm}

    \begin{subfigure}{0.3\textwidth}
        \includegraphics[width=\textwidth]{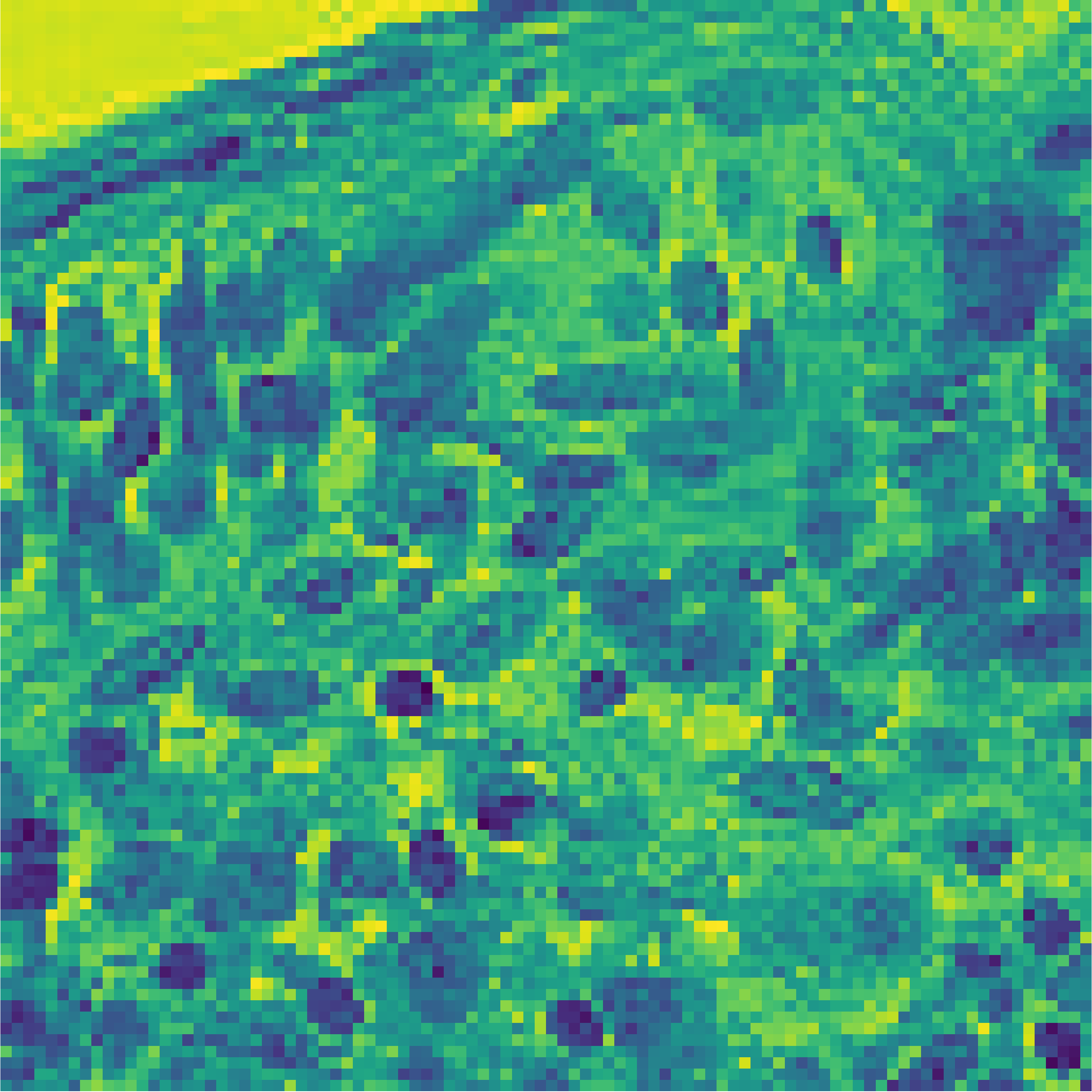}
    \end{subfigure}
    \quad
    \begin{subfigure}{0.3\textwidth}
        \includegraphics[width=\textwidth]{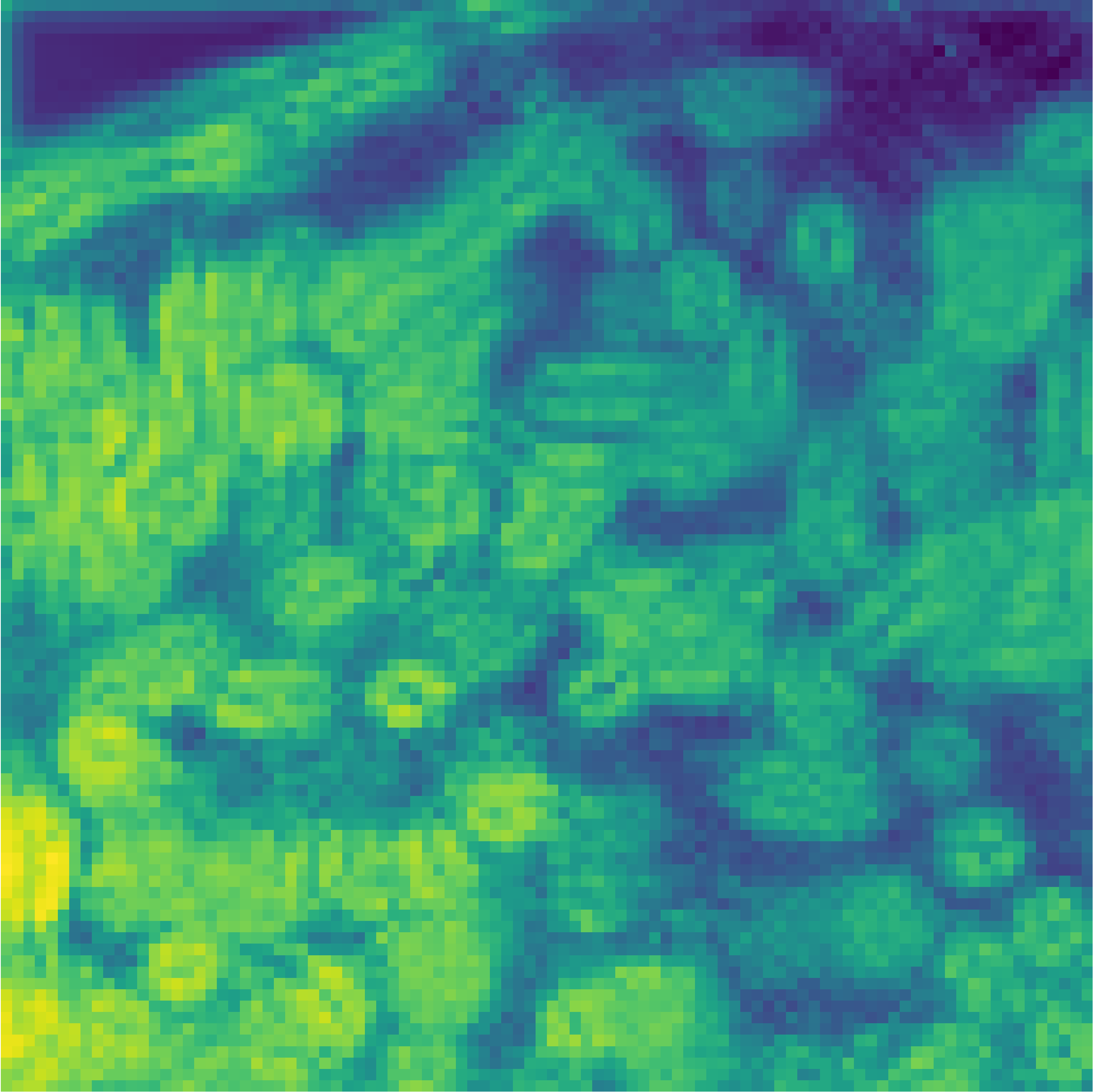}
    \end{subfigure}
    \vskip -4mm
    \caption{Equivariant attention maps on the PCam dataset. The predicted attention maps behave equivariantly for group symmetries. Additionally, the network seems to learn to focus on the nuclei of the cells and remove background elements during training.}\label{fig:pcam_examples}
\end{figure}
\begin{table}[t]
\caption{Test error rates on PCam.}
\vskip -3mm
\label{tab:pcam}
\begin{center}
\begin{small}
\begin{sc}
\scalebox{0.85}{
\begin{tabular}{r|cccc}
\toprule
Network & Type & Test Error (\%) & Param.\\
\toprule
\multirow{5}{*}{DenseNet}& $\mathbb{Z}^{2}$ & 15.93 &  130.60k\\
& $p4$ & 12.45 &  129.65k\\
& $\alpha_{\text{f}}$-$p4$ &\textbf{11.34} & 140.45k\\
& $p4m$ & 11.64  & 124.21k\\
& $\alpha_{\text{f}}$-$p4m$ & \textbf{10.88} &  141.22k\\
\bottomrule
\end{tabular}}
\end{sc}
\end{small}
\end{center}
\vskip -0.2in
\end{table}
% =====SECTION=====
\section{Discussion and Future Work}
Our results show that attentive group convolutions can be utilized as a drop-in replacement for standard and group equivariant convolutions that simultaneously facilitates the interpretability of the network decisions. Similarly to convolutional and group convolutional networks, attentive group convolutional networks also benefit of data augmentation. Interestingly, however, we also see that including additional symmetries reduces the effect of augmentations given by group elements. This finding supports the intuition that symmetry variants of the same concept are learned independently for non-equivariant networks (see Fig. 2 in \cite{alexnet}). The main shortcoming of our approach is its computational burden. As a result, the application of $\alpha$-networks is computationally unfeasible for networks with several layers or channels. We believe, however, by extrapolation of our results on rot-MNIST, that further performance improvements are to be expected for $\alpha$ variations, should hardware requirements suffice. 

Group convolutional networks have recently been proven very successful in medical imaging applications \cite{bekkers2018roto, winkels20183d, lafarge2020rototranslation}. Since explainability plays a crucial role here, we believe that our attentive maps could be of high relevance to aid the explainability of the network decisions. Moreover, since our attention maps are guaranteed to be equivariant to transformations in the considered group, it is ensured that the predicted attention maps will be consistent across group symmetries. We believe this to be of crucial importance for rotation invariant tasks. Illustratively, in contrast to vanilla attentive CNNs, a malignant tissue will be ensured to generate consistent attention maps regardless of the orientation at which it has been provided to the network.

In future work, we want to explore ways to reduce the computational cost of full attention networks. If successful, we consider feasible to obtain a direct performance boost over our CIFAR-10 and PCam experimental results, without extensive additional memory requirements. Furthermore, we want to extend our work to symmetry groups defined on 3D. By doing so, we expect the range of possible applications of our work to reach several other important applications such as 3D medical imaging applications like CT-scans and other voxel-based representations.

% =====SECTION=====
\section{Conclusion}
We introduced attentive group convolutions, a generalization of the group convolution in which attention is utilized to explicitly highlight meaningful relationships among symmetries. We provided a general mathematical framework for group equivariant visual attention and indicated that prior work on visual attention can be perfectly described as special cases of the attentive group convolution. Our experimental results indicate that attentive group convolutional networks consistently outperform conventional group convolutional ones and additionally provide equivariant attention maps that behave predictively for symmetries of the group, with which learned concepts can be visualized.

% Acknowledgements should only appear in the accepted version.
\section*{Acknowledgements}
We gratefully acknowledge our anonymous reviewers for their helpful and valuable commentaries, and Hyunjik Kim for valuable remarks to improve the readability of our paper. This work is part of the Efficient Deep Learning (EDL) programme (grant number P16-25), partly funded by the Dutch Research Council (NWO) and Semiotic Labs, and the research programme VENI (grant number 17290), financed by the Dutch Research Council (NWO). This work was carried
out on the Dutch national e-infrastructure with the support of SURF Cooperative.

\bibliography{example_paper}
\bibliographystyle{icml2020}

% \ifarxivcopy
\clearpage

\appendix
\twocolumn[\section*{{\LARGE Supplementary Material}\vspace{2mm} \\ {\Large Attentive Group Equivariant Convolutional Networks}} \vspace{4mm}]

\section{Generalized Visual Self-Attention} 
Before we derive the constraints for general visual self-attention and prove Thm.~1 of the main article, we first motivate our definition of group equivariant visual self-attention. In the subsequent subsections we explain that our definition of attentive group convolution, as given in Eq. 14 of the main article, and reformulated in Eq.~\ref{eq:attentivegconv}, essentially describes a group equivariant linear mapping that is augmented with an additional attention function. %~\ref{eq:att_g_conv}

\subsection{Self-attention: From Vectors to Feature Maps}
Let us first consider the general form of a linear map between respectively vector spaces (used in multi-layer perceptrons) and feature maps (used in (group) convolutional neural nets), defined as follows:
\begin{align}
\label{eq:linmapvectors}
\text{vectors:}
\hspace{0.6cm} 
\mathbf{x}^{out}_c &= \sum_{\tilde{c}}^{N_{\tilde{c}}} \mathbf{W}_{c,\tilde{c}} \, \mathbf{x}^{in}_{\tilde{c}},  \\
\label{eq:linmapfeaturemaps}
\text{feat maps:} \hspace{0.1cm} 
f^{out}_c(g) &= \sum_{\tilde{c}}^{N_{\tilde{c}}} \int\limits_G \Psi_{c,\tilde{c}}(g,\tilde{g}) f^{in}_{\tilde{c}}(\tilde{g}) {\rm d}\tilde{g}
\end{align}
Here, the first equation describes a linear map between vectors $\mathbf{x}^{in} \in \mathbb{R}^{N_{\tilde{c}}}$ and $\mathbf{x}^{out} \in \mathbb{R}^{N_{c}}$ via matrix-vector multiplication with matrix $\mathbf{W} \in \mathbb{R}^{N_{c} \times N_{\tilde{c}}}$. The second equation describes a linear map between feature maps $f^{in} \in (\mathbb{L}_2(G))^{N_{\tilde{c}}}$ and $f^{out} \in (\mathbb{L}_2(G))^{N_{c}}$, via a two argument kernel $\Psi \in \mathbb{L}_1(G \times G)^{N_{\tilde{c}} \times N_{c}}$. The two argument kernel $\Psi$ can be seen as the continuous counterpart of the matrix $\mathbf{W}$, and matrix-vector multiplication (sum over input indices) is augmented with an integral over the input coordinates $\tilde{g}$.

Keeping this form of linear mapping, we define the self-attentive map as the regular linear map augmented with attention weights computed from the input. Consequently, we formally define the self-attentive mappings as:
\begin{align}
\label{eq:generalattentionvectors}
\text{vectors:} \hspace{0.6cm} 
\mathbf{x}^{out}_c &= \sum_{\tilde{c}}^{N_{\tilde{c}}} \mathbf{A}_{c,\tilde{c}}\mathbf{W}_{c,\tilde{c}} \, \mathbf{x}^{in}_{\tilde{c}}, \\
\label{eq:generalattentionfeaturemaps}
\text{feat maps:} \hspace{0.1cm} 
f^{out}_c(g) &= \sum_{\tilde{c}}^{N_{\tilde{c}}} \int\limits_G \alpha_{c,\tilde{c}}(g,\tilde{g}) \Psi_{c,\tilde{c}}(g,\tilde{g}) \\[-5mm]
& \hspace{35mm} 
f^{in}_{\tilde{c}}(\tilde{g}) {\rm d}\tilde{g} \nonumber
\end{align}
in which the attention weights are computed from the input via some operator $\mathcal{A}$, i.e., $\mathbf{A}_{c,\tilde{c}} = \mathcal{A}[\mathbf{x}^{in}]_{c,\tilde{c}}$ in the vector case and $\alpha_{c,\tilde{c}} = 
\mathcal{A}[f^{in}]_{c,\tilde{c}}$ in the case of feature maps.

\subsection{Equivariant Linear Maps are Group Convolutions}
\label{sec:gconvderivation}
Now, since we want to preserve the spatial correspondences between the input and output feature maps, special attention should be paid to the continuous self-attentive mappings. In other words, these operators should be equivariant. By including an equivariance constraint on the linear mapping of Eq.~\ref{eq:linmapfeaturemaps} we obtain a group convolution (see e.g. \citet{kondor2018generalization,cohen2019general,bekkers2020bspline}).\break The derivation is as follows: 

Imposing the equivariance constraint $
\mathcal{L}_{{g}}[f^{in}] \underset{\text{Eq.} \ref{eq:linmapfeaturemaps}}{\mapsto} \mathcal{L}_{{g}}[f^{out}]$ means that for all $\overline{g},g \in G$ and all $f \in \mathbb{L}_2(G)^{N_c}$ we must guarantee that:
\begin{multline*}
   \hspace{2.5cm} \mathcal{L}_{{g}}[f^{in}] = \mathcal{L}_{{g}}[f^{out}]\\
\Leftrightarrow \\
\int_G \Psi_{c,\tilde{c}}(g,\tilde{g}) \mathcal{L}_{\overline{g}}\left[f\right](\tilde{g}) {\rm d}\tilde{g}
=
\int_G \Psi_{c,\tilde{c}}(\overline{g}^{-1} g, \tilde{g}) f(\tilde{g}) {\rm d}\tilde{g} \\
\Leftrightarrow \\
\int_G \Psi_{c,\tilde{c}}(g,\tilde{g}) f(\overline{g}^{-1}\tilde{g}) {\rm d}\tilde{g}
=
\int_G \Psi_{c,\tilde{c}}(\overline{g}^{-1} g, \tilde{g}) f(\tilde{g}) {\rm d}\tilde{g} \\
\Leftrightarrow \\
\int_G \Psi_{c,\tilde{c}}(g,\tilde{g}) f(\overline{g}^{-1}\tilde{g}) {\rm d}\tilde{g}
= \hspace{3.5cm}\\\hspace{2.5cm}
\int_G \Psi_{c,\tilde{c}}(\overline{g}^{-1} g, \overline{g}^{-1}\tilde{g}) f(\overline{g}^{-1}\tilde{g}) {\rm d}\tilde{g},
\end{multline*}
where the change of variables $\tilde{g}\rightarrow\overline{g}^{-1}\tilde{g}$ as well as the left-invariance of the Haar measure ( ${\rm d}(\overline{g}^{-1}\tilde{g}) = {\rm d}\tilde{g}$)) is used in the last step. Since this equality must hold for all $f\in\mathbb{L}_2(G)^{N_{\tilde{c}}}$ we obtain that $\Psi$ should be left-invariant in both input arguments. In other words, we have that $$
\forall_{\overline{g}.\in G}: \Psi(\overline{g}g,\overline{g}\tilde{g}) = \Psi(g,\tilde{g})
$$ 
Resultantly, we can always multiply both arguments with $g^{-1}$ and obtain $\Psi(e,g^{-1} \tilde{g})$, which is effectively a single argument function $\psi(g^{-1}\tilde{g}):=\Psi(e,g^{-1}\tilde{g})$ that takes as input a relative \enquote{displacement} $g^{-1}\tilde{g}$. Consequently, under the equivariance constraint, Eq.~\ref{eq:linmapfeaturemaps} becomes a group convolution:
$$
f^{out}_c(g) = \sum_{\tilde{c}}^{N_{\tilde{c}}} \int\limits_G \psi_{c,\tilde{c}}(g^{-1}\tilde{g}) f^{in}_{\tilde{c}}(\tilde{g}) {\rm d}\tilde{g}. 
$$

\subsection{Proof of Theorem 1}
We can apply the same type of derivation to reduce the general form of visual self-attention of Eq.~\ref{eq:generalattentionfeaturemaps} to our main definition of attentive group convolution:
\begin{equation}
\label{eq:attentivegconv}
f^{out}_c(g) = \sum_{\tilde{c}}^{N_{\tilde{c}}} \int\limits_G \alpha_{c,\tilde{c}}(g,\tilde{g})\psi_{c,\tilde{c}}(g^{-1}\tilde{g}) f^{in}_{\tilde{c}}(\tilde{g}) {\rm d}\tilde{g}.
\end{equation}
However, we cannot reduce attention map $\alpha$ to a single argument function like we did for the kernel $\Psi$ since $\alpha$ depends on the input $f^{in}$. To see this consider the following:

Without loss of generality, let $\mathfrak{A}: \mathbb{L}_2(G) \rightarrow \mathbb{L}_2(G)$ denote the attentive group convolution defined by Eq.~\ref{eq:attentivegconv}, with $N_c = N_{\tilde{c}} = 1$, and some $\psi$ which in the following we omit in order to simplify our derivation. Equivariance of $\mathfrak{A}$ implies that $\forall_{f \in \mathbb{L}_2(G)}$, $ \forall_{\overline{g},g \in G}:$
\begin{gather*}
\mathfrak{A}\left[\mathcal{L}_{\overline{g}}\left[f\right]\right](g) = \mathcal{L}_{\overline{g}}\left[\mathfrak{A}\left[f\right]\right](g) \\
\Leftrightarrow \\
\mathfrak{A}\left[\mathcal{L}_{\overline{g}}\left[f\right]\right](g) = \mathfrak{A}\left[f\right](\overline{g}^{-1} g) \\
\Leftrightarrow \\
\int_G \mathcal{A}\left[\mathcal{L}_{\overline{g}}\left[f\right]\right](g,\tilde{g})\mathcal{L}_{\overline{g}}\left[f\right](\tilde{g}) {\rm d}\tilde{g} 
= \hspace{2.2cm}\\\hspace{3.5cm}
\int_G \mathcal{A}\left[f\right](\overline{g}^{-1}g,\tilde{g}) f(\tilde{g}) {\rm d}\tilde{g} \\
\Leftrightarrow \\
\int_G \mathcal{A}\left[\mathcal{L}_{\overline{g}}\left[f\right]\right](g,\tilde{g})f(
\overline{g}^{-1}\tilde{g}) {\rm d}\tilde{g} 
= \hspace{1.9cm}\\\hspace{2.5cm}
\int_G \mathcal{A}\left[f\right](\overline{g}^{-1}g,\overline{g}^{-1}\tilde{g}) f(\overline{g}^{-1}\tilde{g}) {\rm d}\tilde{g},
\end{gather*}
where we once again perform the variable substitution $\tilde{g}\rightarrow \overline{g}^{-1}\tilde{g}$ at the right hand side of the last step. This must hold for all $f \in \mathbb{L}_2(G)$ and hence:
\begin{equation}
\label{eq:constraint}
\forall_{\overline{g}\in G}: \mathcal{A}\left[\mathcal{L}_{\overline{g}}f\right](g,\tilde{g}) = \mathcal{A}\left[f\right](\overline{g}^{-1}g,\overline{g}^{-1}\tilde{g}),
\end{equation}
which proves the constraint on $\mathcal{A}$ as given in Thm.~1 of the main article. Just as for convolutions in Sec.~\ref{sec:gconvderivation}, we can turn this into a single argument function as:
\begin{equation}
\label{eq:singleargatt}
\mathcal{A}[f](g,\tilde{g}) = \mathcal{A}\left[\mathcal{L}_{g^{-1}} f\right](e,g^{-1} \tilde{g}) =: \mathcal{A}'[\mathcal{L}_{g^{-1}} f](g^{-1}\tilde{g}),
\end{equation}
in which $\mathcal{A}'$ is an attention operator that generates a single argument attention map from an input $f$. However, this would mean that for each $g$ the input should be transformed via $\mathcal{L}_{g^{-1}}$, which does not make things easier for us. Things do get easier when we choose to attend to either the input \emph{or} the output, which we discuss next.
\begin{corollary}
\label{cor1}
Each attention operator $\mathcal{A}$ that generates an attention map $\alpha:G\times G \rightarrow [0,1]$ which is left-invariant to either one of the arguments, and thus exclusively attends either the input or output domain, satisfies the equivariance constraint of Eq.~\ref{eq:constraint}, iff the operator is G-equivariant, i.e., a group convolution.
\end{corollary}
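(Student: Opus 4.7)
The plan is to reduce the corollary to a direct consequence of Theorem~1 combined with the characterization of equivariant linear maps already derived in Sec.~\ref{sec:gconvderivation}. The crucial simplification comes from the assumption that $\alpha(g,\tilde{g})$ is left-invariant in one of its two arguments: this collapses the two-argument attention map into a single-argument function on $G$, at which point the two-argument equivariance constraint of Eq.~\ref{eq:constraint} becomes the standard operator equivariance already used in the paper's derivation of the group convolution.

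Without loss of generality, I would treat the case in which $\alpha$ is invariant in its first (output) argument $g$, since the input-argument case is symmetric. Under this assumption $\alpha(g,\tilde{g})$ depends only on $\tilde{g}$, so I would introduce a single-argument attention operator $\tilde{\mathcal{A}}$ defined by $\tilde{\mathcal{A}}[f](\tilde{g}) := \mathcal{A}[f](g,\tilde{g})$, with well-definedness ensured by the invariance. Plugging this into Eq.~\ref{eq:constraint}, the right-hand side becomes $\tilde{\mathcal{A}}[f](\overline{g}^{-1}\tilde{g}) = \mathcal{L}_{\overline{g}}[\tilde{\mathcal{A}}[f]](\tilde{g})$ by definition of the regular representation, while the left-hand side is just $\tilde{\mathcal{A}}[\mathcal{L}_{\overline{g}} f](\tilde{g})$. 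Thus Eq.~\ref{eq:constraint} collapses to the familiar intertwining relation $\tilde{\mathcal{A}} \circ \mathcal{L}_{\overline{g}} = \mathcal{L}_{\overline{g}} \circ \tilde{\mathcal{A}}$, i.e.\ the statement that $\tilde{\mathcal{A}}$ is $G$-equivariant as a map between function spaces on $G$.

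The forward implication of the ``iff'' is then immediate: any $G$-equivariant $\tilde{\mathcal{A}}$ satisfies this intertwining relation, and unpacking the definitions recovers Eq.~\ref{eq:constraint}. For the converse I would simply appeal to the derivation already carried out in Sec.~\ref{sec:gconvderivation}, where equivariance of a linear operator between $\mathbb{L}_2(G)$ spaces was shown to force its two-argument kernel to be left-invariant, hence reducible to a one-argument convolution kernel of the form $\psi(g^{-1}\tilde{g})$. Applying that argument verbatim to $\tilde{\mathcal{A}}$ yields that $\tilde{\mathcal{A}}$ acts as a group convolution on $f$. The input-argument invariance case runs identically with the roles of $g$ and $\tilde{g}$ swapped and $\overline{g}^{-1}$ acting from the appropriate side.

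The main subtlety I expect to have to address is the implicit linearity assumption: the identification of $G$-equivariant operators with group convolutions in Sec.~\ref{sec:gconvderivation} was established only for linear maps, whereas a genuine attention operator typically composes linear maps with pointwise nonlinearities such as the sigmoid used in Eqs.~\ref{eq:compute_ak_2}--\ref{eq:compute_ax}. Since such pointwise nonlinearities commute with the regular representation and therefore preserve equivariance, the precise reading of the corollary is that the \emph{linear} building blocks of $\mathcal{A}$ must be group convolutions, which is consistent with the concrete $\varphi^{\mathcal{C}}$ and $\varphi^{\mathcal{X}}$ constructions in Sec.~\ref{sec:equiv_attcoefs}.
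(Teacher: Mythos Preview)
Your proposal is correct and follows essentially the same route as the paper: reduce the two-argument map to a single-argument operator via the invariance assumption, then observe that Eq.~\ref{eq:constraint} collapses to the standard intertwining relation $\mathcal{A}'\circ\mathcal{L}_{\overline g}=\mathcal{L}_{\overline g}\circ\mathcal{A}'$. The paper's proof stops at ``$\mathcal{A}'$ must be an equivariant operator'' and treats the identification with group convolution as already established; your explicit appeal to Sec.~\ref{sec:gconvderivation} and your remark on the linearity caveat (pointwise nonlinearities commute with $\mathcal{L}_g$, so only the linear constituents are forced to be group convolutions) make that step more honest than the original, but the underlying argument is the same.
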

\begin{proof}
Left-invariant to either one of the arguments (let us now consider invariance in the first argument) means that:
\begin{align*}
\forall_{g,\tilde{g}}: \;\;  \mathcal{A}[f](g,\tilde{g}) = \mathcal{A}[f]](e,\tilde{g}),
\end{align*}
and hence, we are effectively dealing with a single argument attention map, which we define as $\mathcal{A}'[f](\tilde{g}):=\mathcal{A}(e,\tilde{g})$. Consequently, the equivariance constraint of Eq.~\ref{eq:constraint} becomes:
\begin{align*}
\forall_{\overline{g}\in G}: \mathcal{A}\left[\mathcal{L}_{\overline{g}}f\right](g,\tilde{g}) &= \mathcal{A}\left[f\right](\overline{g}^{-1}g,\overline{g}^{-1}\tilde{g}) \Leftrightarrow \\
\forall_{\overline{g}\in G}: \mathcal{A}'\left[\mathcal{L}_{\overline{g}}f\right](\tilde{g}) &= \mathcal{A}'\left[f\right](\overline{g}^{-1}\tilde{g}) \Leftrightarrow \\
\forall_{\overline{g}\in G}: \mathcal{A}'\left[\mathcal{L}_{\overline{g}}f\right] &= \mathcal{L}_{\overline{g}}\left[\mathcal{A}'\right]\left[f\right].
\end{align*}
Conclusively, $\mathcal{A}'$ must be an equivariant operator.
\end{proof}
The derivation of the Eq.~\ref{eq:constraint} together with the proof of Corollary~\ref{cor1} completes the proof of Theorem 1 of the main article.
\subsection{Equivariance Proof of the Proposed Visual Attention}
In this section we revisit the proposed attention mechanisms and prove that they indeed satisfy Thm.~1 of the main article. Recall the general formulation of attentive group convolution given in Eq.~\ref{eq:attentivegconv}. Inspired by the work of \citet{woo2018cbam}, we reduce the computation load by factorizing the attention map $\alpha$ into channel and spatial components via:
$$
\alpha_{c,\tilde{c}}(g,\tilde{g}) = \alpha^{\mathcal{X}}(x, h, \tilde{h})\alpha^{C}_{c,\tilde{c}}(h,\tilde{h})
$$
where $\alpha^{\mathcal{C}}$ attends to both input and output channels as well as input and output poses $h,\tilde{h}\in H$, and spatial attention attends to the output domain $g = (x,h) \in G$ for all input poses $\tilde{h} \in H$ but does not change for input spatial positions $\tilde{x} \in \mathbb{R}^{d}$. We denote the operators $\mathcal{A}^{C}$, $\mathcal{A}^{X}$ utilized to compute the attention maps as $\alpha^{\mathcal{C}} = \mathcal{A}^{\mathcal{C}}[f]$ and $\alpha^{\mathcal{X}} = \mathcal{A}^{\mathcal{X}}[f]$, respectively.

\subsubsection{Channel attention}
We compute channel attention via{\ifarxivcopy\null\else\footnote{In the main article we write Eqs.~\ref{eq:compute_ak_2_suppl}, \ref{eq:spat_att} in convolution form by which only attention to $\tilde{h}$ is considered.  However, in the rot-MNIST experiments we compute attention based on $\tilde{f}$ and apply attention to both $h$ and $\tilde{h}$ via Eqs.~\ref{eq:compute_ak_2_suppl}, \ref{eq:spat_att}. %In order to do so, we slightly formulate Eq.~18 of the main article in order to obtain  Eq.~\ref{eq:compute_ak_2_suppl}.
}\fi:
\begin{align}
    \mathcal{A}^{\mathcal{C}}[f](h,\tilde{h})&=\varphi^{\mathcal{C}}\left[s^{\mathcal{C}}\left[\tilde{f}[f]\right]\right](h,\tilde{h}) \label{eq:compute_ak_2_suppl}\\
    &\hspace{-1.0cm} = \sigma\Big(\big[
    \mathbf{W}_{2}(h^{-1}\tilde{h})\cdot[\mathbf{W}_{1}(h^{-1}\tilde{h})\cdot s^{\mathcal{C}}_{ \text{avg}}(h,\tilde{h})]^{+}
    \big] \nonumber  \\[-2\jot]
   &\hspace{0.2cm}
   + \big[
    \mathbf{W}_{2}(h^{-1}\tilde{h})\cdot[\mathbf{W}_{1}(h^{-1}\tilde{h})\cdot s^{\mathcal{C}}_{ \text{max}}(h,\tilde{h})]^{+}
    \big] \Big) \nonumber
\end{align}
with 
\begin{equation}\label{eq:intermediate}
    \tilde{f}_{c,\tilde{c}}(x,h,\tilde{h}) := \big[f_{\tilde{c}} \star_{\mathbb{R}^{d}} \mathcal{L}_{h}[\psi_{c,\tilde{c}}]\big](x, \tilde{h})
\end{equation} the intermediary result from the convolution between the input $f$ and the $h$-transformation of the filter $\psi$, $\mathcal{L}_{h}[\psi]$ before pooling over $\tilde{c}$ and $\tilde{h}$. $s^{\mathcal{C}}_{\text{avg}}$ and $s^{\mathcal{C}}_{\text{max}}$ denote respectively average and max pooling over the ${x}$ coordinate. 

Here, we apply a slight abuse of notation with $\tilde{f}[f]$ and $s^{\mathcal{C}}[\tilde{f}]$ in order to keep track of the dependencies. In order to proof equivariance of the attention operator $\mathcal{A}^{\mathcal{C}}$ we need to proof that $\forall_{\overline{g} \in G}: \mathcal{A}^{\mathcal{C}}[\mathcal{L}_{\overline{g}}[ f]](h,\tilde{h}) = \mathcal{A}^{\mathcal{C}}[f](\overline{h}^{-1} h,\overline{h}^{-1} \tilde{h})$, with $\overline{g} = (\overline{x},\overline{h})$. To this end, we first identify the equivariance and invariance properties of the functions used in Eq.~\ref{eq:compute_ak_2_suppl}. 

From Eq.~\ref{eq:intermediate} we see that the intermediate convolution result $\tilde{f}$ is equivariant via $\tilde{f}[\mathcal{L}_{\overline{g}}[f]](\tilde{x},\tilde{h}) =  \tilde{f}[f](\overline{g}^{-1} x, \overline{h}^{-1} h, \overline{h}^{-1} \tilde{h})$. For the statistics operators $s^{\mathcal{C}}$ we have invariance w.r.t. translations due to the pooling over $x$, and equivariance w.r.t. parameter $\overline{h}$ via $s^{\mathcal{C}}[\tilde{f}[\mathcal{L}_{\overline{g}}{f}]](h,\tilde{h}) = s^{\mathcal{C}}[\tilde{f}[f]](\overline{h}^{-1} h,\overline{h}^{-1}\tilde{h})$. Now, we propagate the transformation on the input and compute the result of $\mathcal{A}^{\mathcal{C}}[\mathcal{L}_{\overline{g}}[f]](g,\tilde{g})$. That is, we compute the left-hand side of the constraint given in Eq.~\ref{eq:constraint}, where, for brevity, we omit the $s^{\mathcal{C}}_{\text{max}}$ term:
\begin{multline*}
\mathcal{A}^{\mathcal{C}}[\mathcal{L}_{\overline{g}}[f]](g,\tilde{g}) = \\
    \mathbf{W}_{2}(h^{-1}\tilde{h})\cdot[\mathbf{W}_{1}(h^{-1}\tilde{h})\cdot s^{\mathcal{C}}_{ \text{\text{avg}}}(\overline{h}^{-1}h,\overline{h}^{-1}\tilde{h})]^{+}.
\end{multline*}
The right-hand side of Eq.~\ref{eq:constraint} is given by:
\begin{multline*}
\mathcal{A}^{\mathcal{C}}[f](\overline{g}^{-1}g,\overline{g}^{-1}\tilde{g}) = \\
    \mathbf{W}_{2}(h^{-1}\tilde{h})\cdot[\mathbf{W}_{1}(h^{-1}\tilde{h})\cdot s^{\mathcal{C}}_{ \text{\text{avg}}}(\overline{h}^{-1}h,\overline{h}^{-1}\tilde{h})]^{+}.
\end{multline*}
and hence, Eq.~\ref{eq:constraint} is satisfied for all $\overline{g} \in G$. Resultantly, $\mathcal{A}^{\mathcal{C}}$ is a valid attention operator.

\subsubsection{Spatial attention}

The spatial attention map $\alpha^{\mathcal{X}}$ is computed via: 
\begin{align}
\alpha^{\mathcal{X}}(g, \tilde{h}) &= \mathcal{A}^{\mathcal{X}}[f](g, \tilde{h}) \nonumber \\
&= \varphi^{\mathcal{X}}\left[s^{\mathcal{X}}\left[\tilde{f}[f]\right]\right](g,\tilde{h})\nonumber\\
&=\sigma\left([s^{\mathcal{X}} \star_{\mathbb{R}^{d}} \mathcal{L}_{h}[\psi^{\mathcal{X}}]\right)(x, \tilde{h}) \label{eq:spat_att}
\end{align}
where $\sigma$ is a point-wise logistic sigmoid, $\psi^{\mathcal{X}}: G \rightarrow \mathbb{R}^2$ is a group convolution filter and $s^{\mathcal{X}}[\tilde{f}]:G \times H \rightarrow \mathbb{R}^2$ is a map of averages and maximum values taken over the channel axis at each $g \in G$ in $\tilde{f}$ for each $\tilde{h} \in H$. Note that Eq.~\ref{eq:spat_att} corresponds to a group convolution up to the final pooling operation over $\tilde{h}$. Since the statistics operator $s^{\mathcal{X}}$ is invariant w.r.t. translations in the input and Eq.~\ref{eq:spat_att} corresponds to a group convolution (up to pooling over $\tilde{h}$), we have that $\mathcal{A}^{X}$ is a valid attention operator as well.% The described attention operator is equivariant, and it produces a single-argument attention map. Due to Corollary \ref{cor1}, it is thus a valid attention operator.

\section{Extended Implementation Details}\label{appx:extended_details}
In this section we provide extended details over our implementation. For the sake of completeness and reproducibility, we summarize the exact training procedures utilized during our experiments. Moreover, we delve into some important changes performed to some network architectures during our experiments to ensure \textit{exact} equivariance, and shed light into their importance for our equivariant attention maps.

\subsection{General Observations}

We utilize \texttt{PyTorch} for our implementation. Any missing parameter specification in the following sections can be safely considered to be the default value of the corresponding parameter. For batch normalization layers, we utilize eps=0.00002 similarly to \citet{GCNN}.

\subsection{rot-MNIST}
For rotational MNIST, we utilized the same backbone network as in \citet{GCNN}. During training we utilize Adam \cite{kingma2014adam}, batches of size 128, weight decay of 0.0001, learning rate of 0.001, drop-out rate of 0.3 and perform training for 100 epochs. Importantly and contrarily to \citet{GCNN}, we consistently experience improvements when utilizing drop-out and therefore we do not exclude it for any model. 

\subsection{CIFAR-10}
It is not clear from \citet{springenberg2014striving, GCNN} which batch size is used in their experiments. For our experiments, we always utilize batches of size 128.

\subsubsection{All-CNN}
We utilize the All-CNN-C structure of \citet{springenberg2014striving}. Analogously to \citet{springenberg2014striving, GCNN}, we utilize stochastic gradient descent, weight decay of 0.001 and perform training for 350 epochs. We utilize a grid search on the set $\{$0.01, 0.05, 0.1, 0.25$\}$ for the learning rate and report the best obtained performance. Furthermore, we reduce the learning rate by a factor of 10 at epochs 200, 250 and 300.

\subsubsection{ResNet44}
Similar to \citet{GCNN}, we utilize stochastic gradient descent, learning rate of 0.05 and perform training for 300 epochs. Furthermore, we reduce the learning rate by a factor of 10 at epochs 50, 100 and 150.

\subsection{PCam}
During training on the PatchCamelyon dataset, we utilize Adam \cite{kingma2014adam}, batches of size 64, weight decay of 0.0001, learning rate of 0.001 and perform training for 100 epochs. Furthermore, we reduce the learning rate by a factor of 2 after 20 epochs of no improvement in the validation loss. 

\section{Effects of Stride and Input Size on Equivariance}\label{sec:approx_equiv}
Theoretically seen, the usage of stride during pooling and during convolution is of no relevance for the equivariance properties of the corresponding mapping \cite{GCNN}. However, we see that in practice stride can affect equivariance for specific cases as is the case for our experiments on CIFAR-10.

Consider the convolution between an input of even size and a small $3$x$3$ filter as shown in Fig.~\ref{fig:norot}. Via group convolutions, we can ensure that the output of the original input and a rotated one (Fig.~\ref{fig:yesrot}) will be exactly equal (up to the same rotation). Importantly however, note that for Fig. \ref{fig:strideequiv}, the local support of the filter, i.e., the input section with which the filter is convolved at a particular position, \textit{is not equivalent} for rotated versions of the input (denoted by blue circles for the non-rotated case and by green circles by for t
he rotated case). As a result, despite the group convolution itself being equivariant, the responses of both convolutions do not entirely resemble one another and, consequently, the depicted strided group convolution is \textit{not} exactly equivariant.

It is important to highlight that this behaviour is just exhibited for the special case when the residual between the used stride and the input size is even. Unfortunately, this is the case both for the ResNet44 as well as the All-CNN networks utilized in our CIFAR-10 experiments. However, as neighbouring pixels are extremely correlated with one another, the effects of this phenomenon are not of much relevance for the classification task itself. As a matter of fact, it can be interpreted as a form of data augmentation by skipping intermediary pixel values. Consequently, we can say that these networks are \textit{approximately equivariant}. 

Importantly, this phenomenon does affect the resulting equivariant attention maps generated via attentive group convolutions as shown in Fig.~\ref{fig:bad_examples}. As these networks are only equivariant in an approximate manner, the generated attention maps are slightly deformed versions of one another for multiple orientations. In order to alleviate this problem, we replace all strided convolutions in the All-CNN and ResNet44 architectures by conventional convolutions (stride=1), followed by spatial max pooling. Resultantly, we are able to produce exactly equivariant attention maps as shown in Fig. 7 in the main text and Fig.~\ref{fig:good_examples} here.

% ==== Figure =====
\begin{figure}[t!]
    \centering
    \begin{subfigure}{\textwidth}
    \centering
    \includegraphics[width=0.8\textwidth]{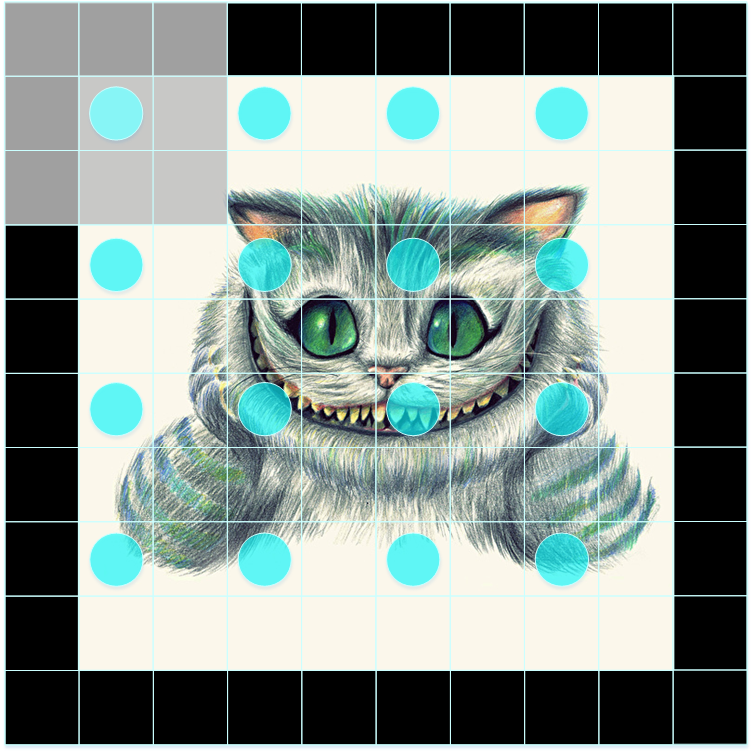}
    \caption{}\label{fig:norot}
    \end{subfigure}
    
    \begin{subfigure}{\textwidth}
    \centering
        \includegraphics[width=0.8\textwidth]{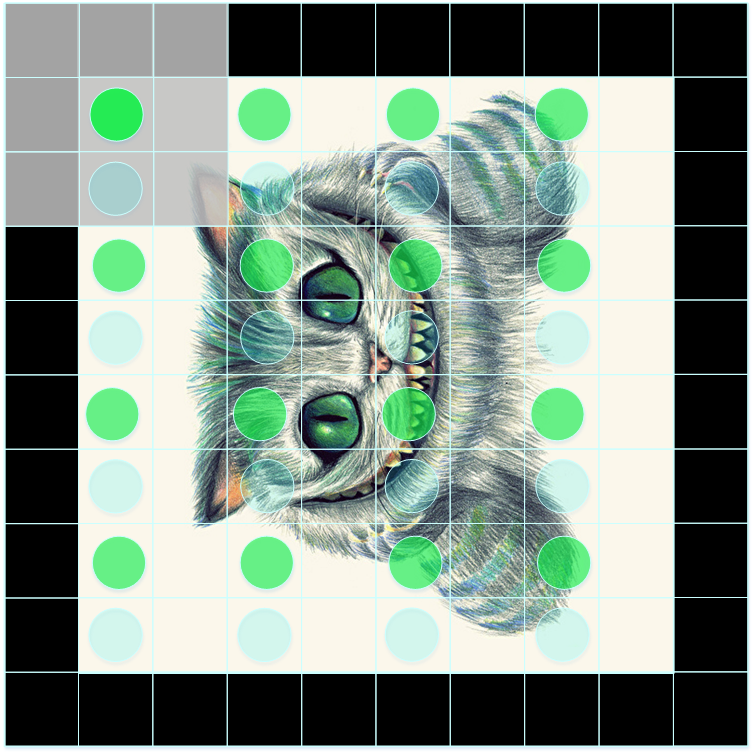}
        \caption{}\label{fig:yesrot}
    \end{subfigure}
    \caption{Effect of stride and input size on exact equivariance. Although group convolutions are ensured to be group equivariant, in practice, if the residual between the stride and the input size is even, as it's the case for the networks utilized in the CIFAR-10 experiments, equivariance is only approximate. This has important effects on equivariant attention maps (Fig.~\ref{fig:bad_examples}).}\label{fig:strideequiv}
\end{figure}

% ==== Figure =====
\begin{figure*}
    \centering
    \begin{subfigure}{0.17\textwidth}
        \includegraphics[width=\textwidth]{images/x_0_rot.png}
    \end{subfigure}
    \hspace{1mm}
    \begin{subfigure}{0.17\textwidth}
        \includegraphics[width=\textwidth]{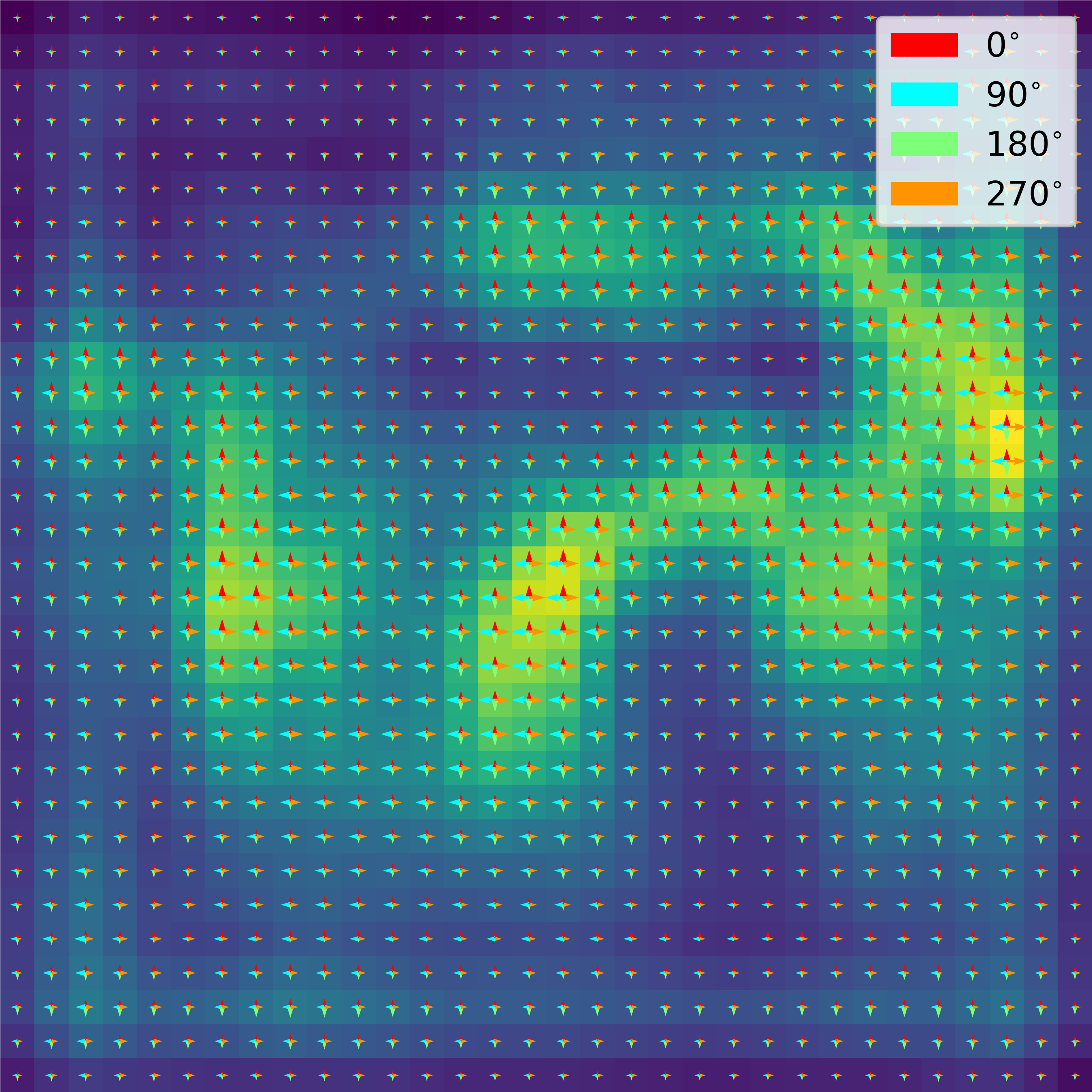}
    \end{subfigure}
    
    \vspace{6mm}
    \begin{subfigure}{0.17\textwidth}
        \includegraphics[width=\textwidth]{images/x_90_rot.png}
    \end{subfigure}
    \hspace{1mm}
    \begin{subfigure}{0.17\textwidth}
        \includegraphics[width=\textwidth]{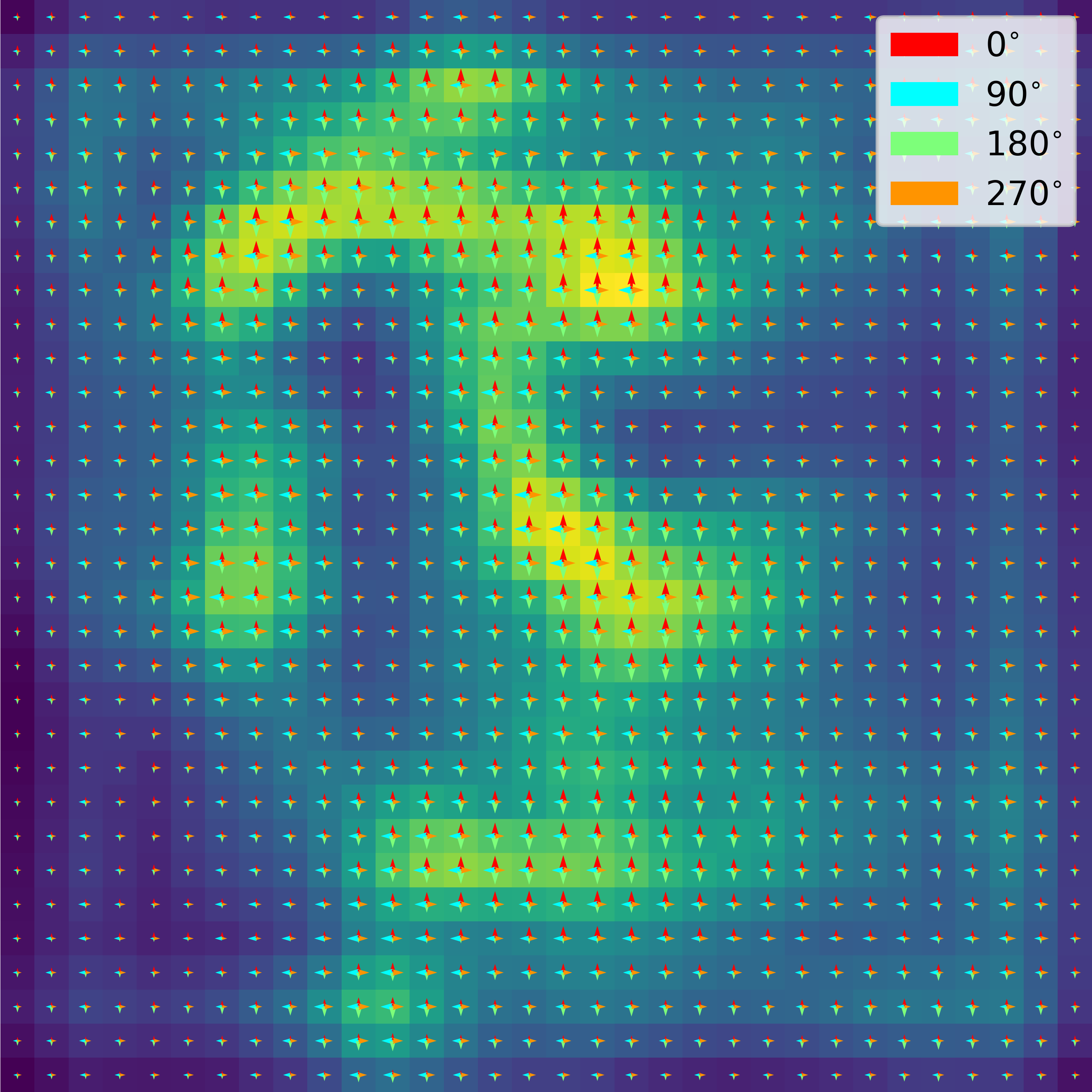}
    \end{subfigure}
    \hfill
    \begin{subfigure}{0.17\textwidth}
        \includegraphics[width=\textwidth]{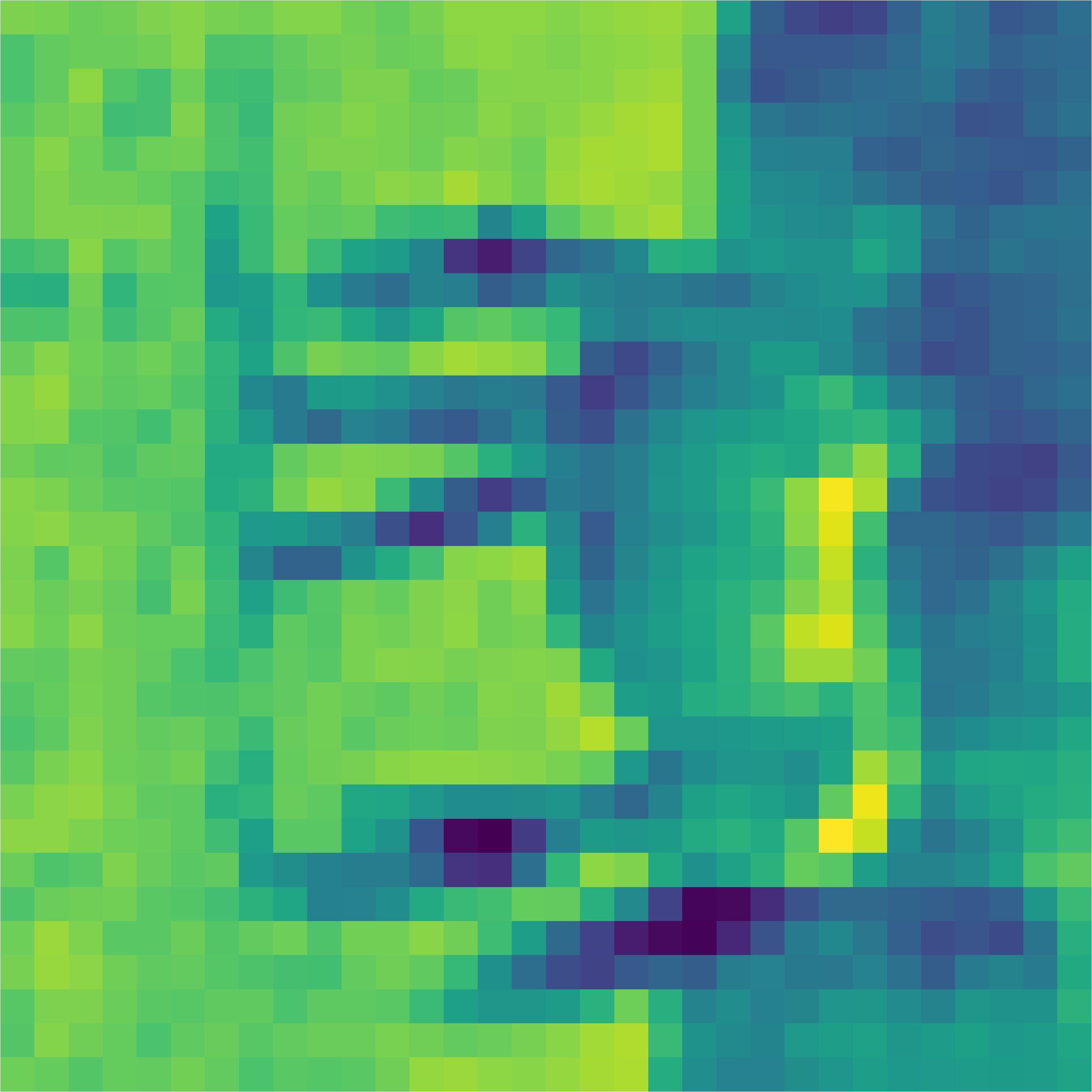}
    \end{subfigure}
    \hspace{1mm}
    \begin{subfigure}{0.17\textwidth}
        \includegraphics[width=\textwidth]{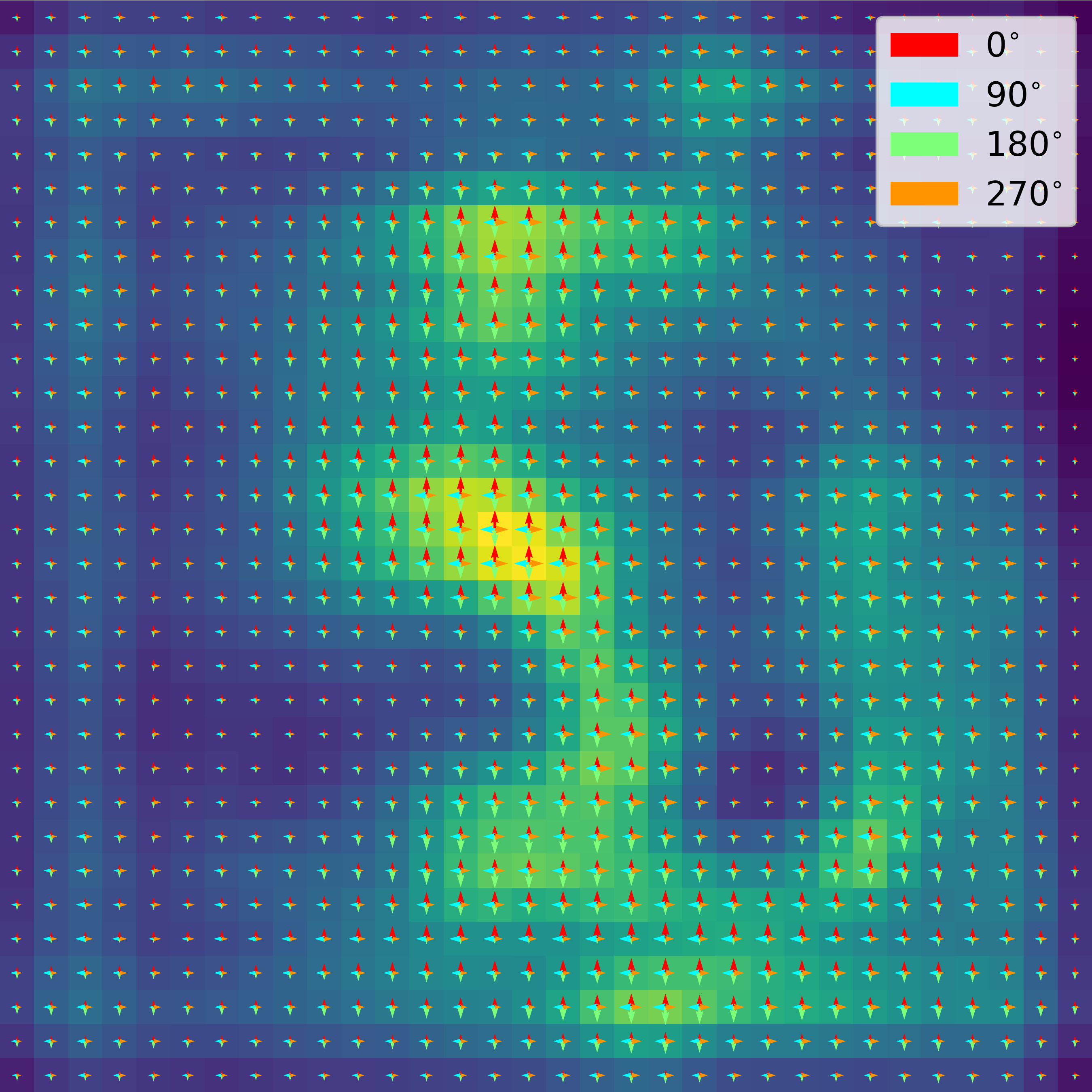}
    \end{subfigure}
    
    \vspace{6mm}
    \centering
    \begin{subfigure}{0.17\textwidth}
        \includegraphics[width=\textwidth]{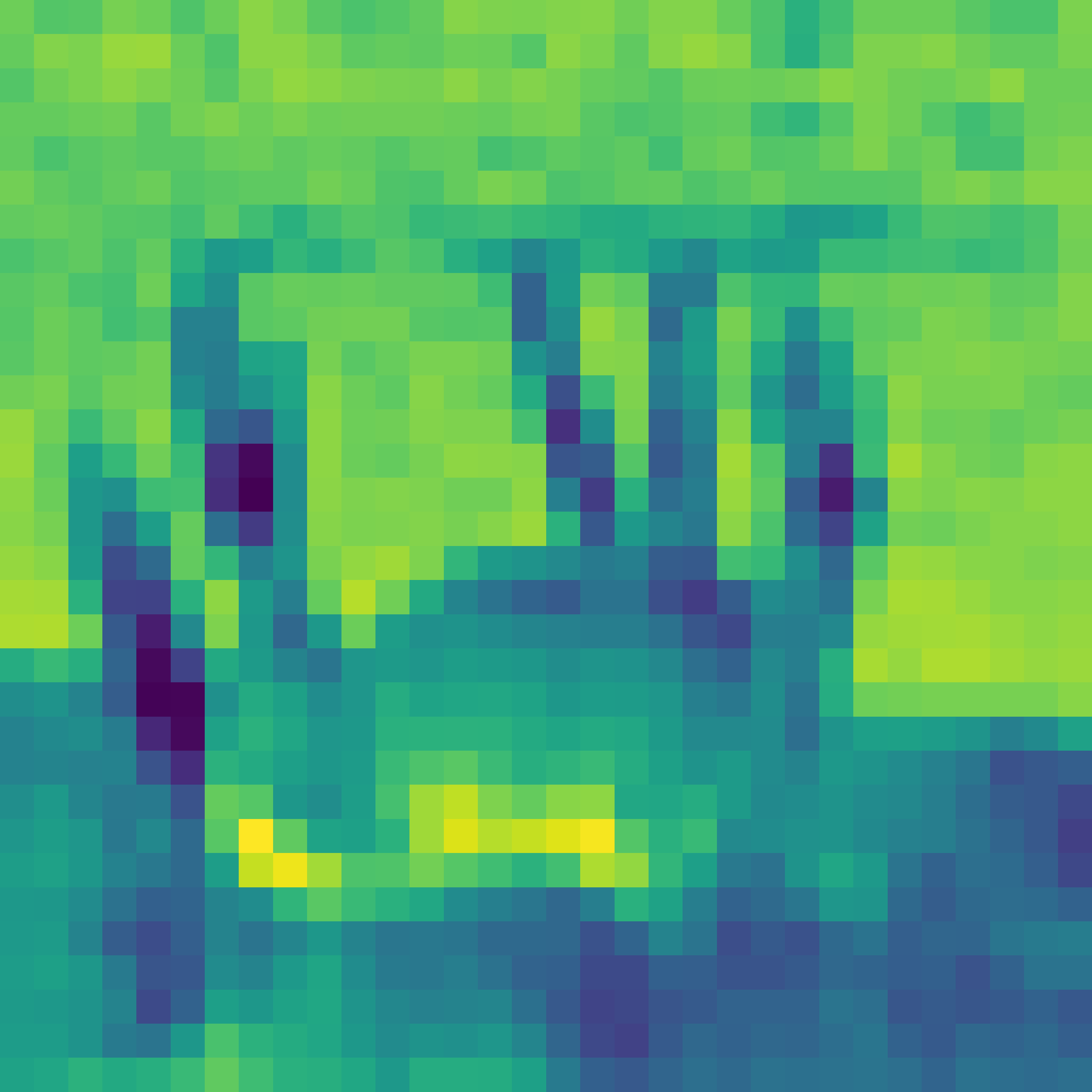}
    \end{subfigure}
    \hspace{1mm}
    \begin{subfigure}{0.17\textwidth}
        \includegraphics[width=\textwidth]{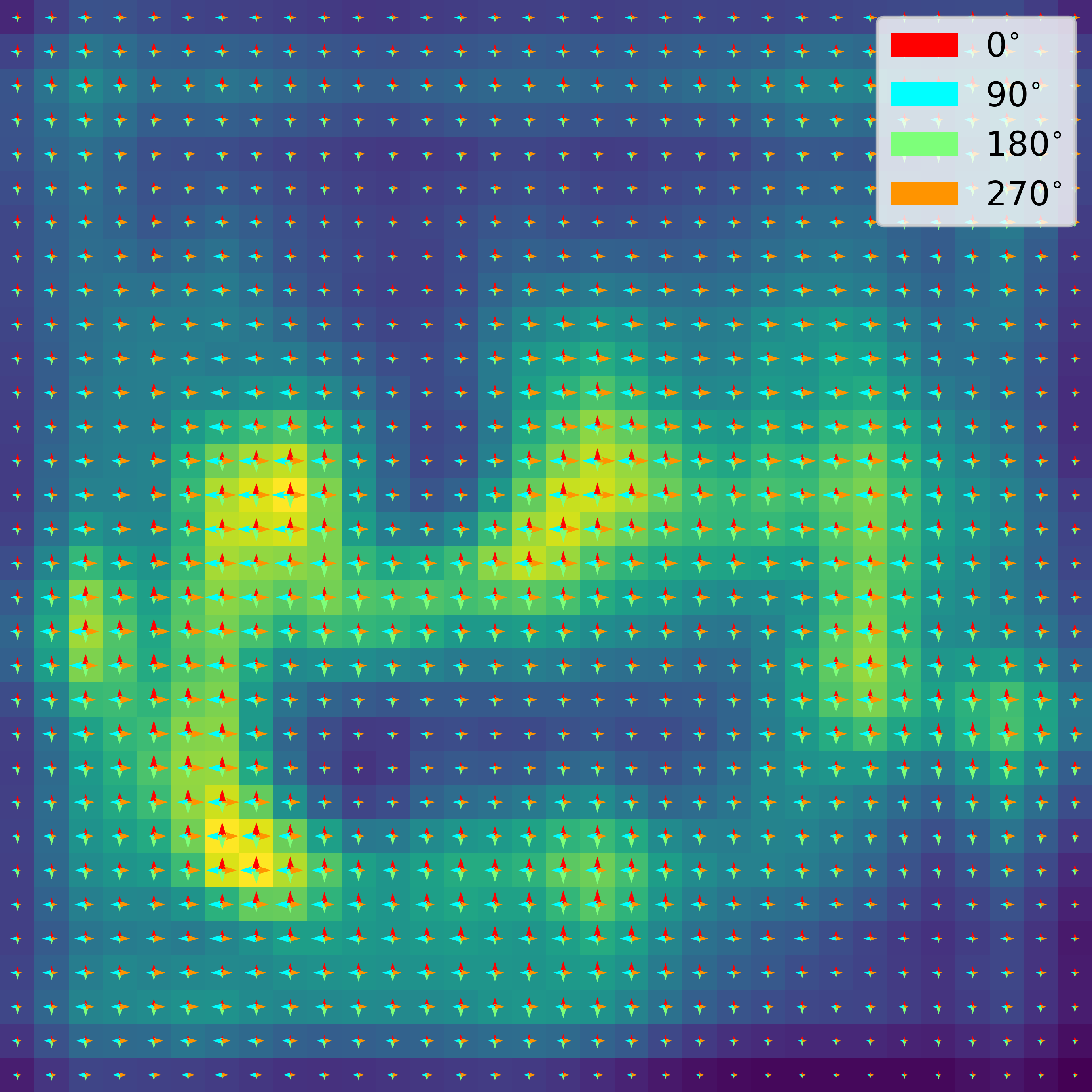}
    \end{subfigure}
    \caption{Examples of equivariant attention maps under the approximate equivariance regime. Note, for example, that attention around the horse's back changes for different orientations.}\label{fig:bad_examples}
\end{figure*}

% ==== Figure =====

\begin{figure*}
    \centering
    \begin{subfigure}{0.17\textwidth}
        \includegraphics[width=\textwidth]{images/x_0_rot.png}
    \end{subfigure}
    \hspace{1mm}
    \begin{subfigure}{0.17\textwidth}
        \includegraphics[width=\textwidth]{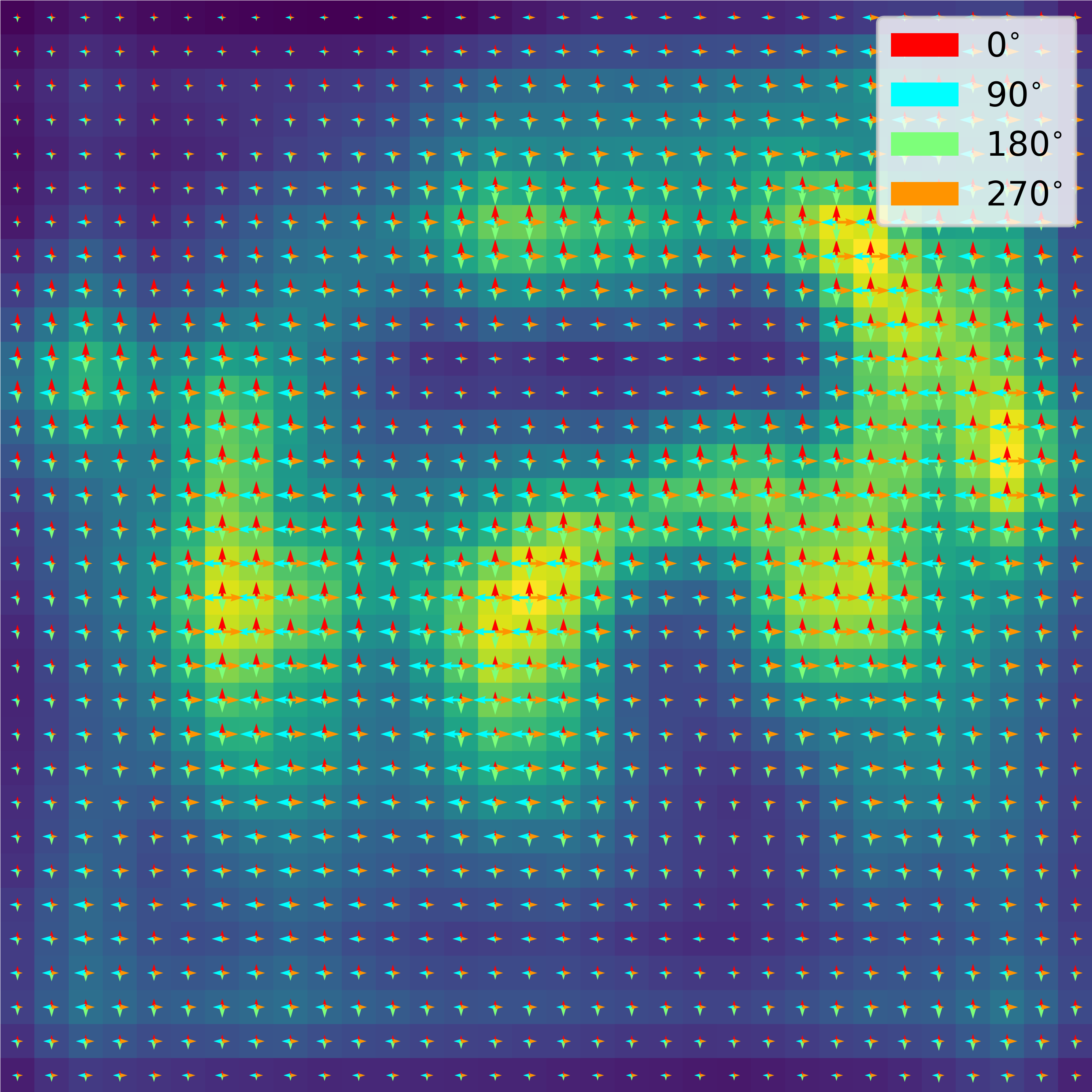}
    \end{subfigure}
    
    \vspace{6mm}
    \begin{subfigure}{0.17\textwidth}
        \includegraphics[width=\textwidth]{images/x_90_rot.png}
    \end{subfigure}
    \hspace{1mm}
    \begin{subfigure}{0.17\textwidth}
        \includegraphics[width=\textwidth]{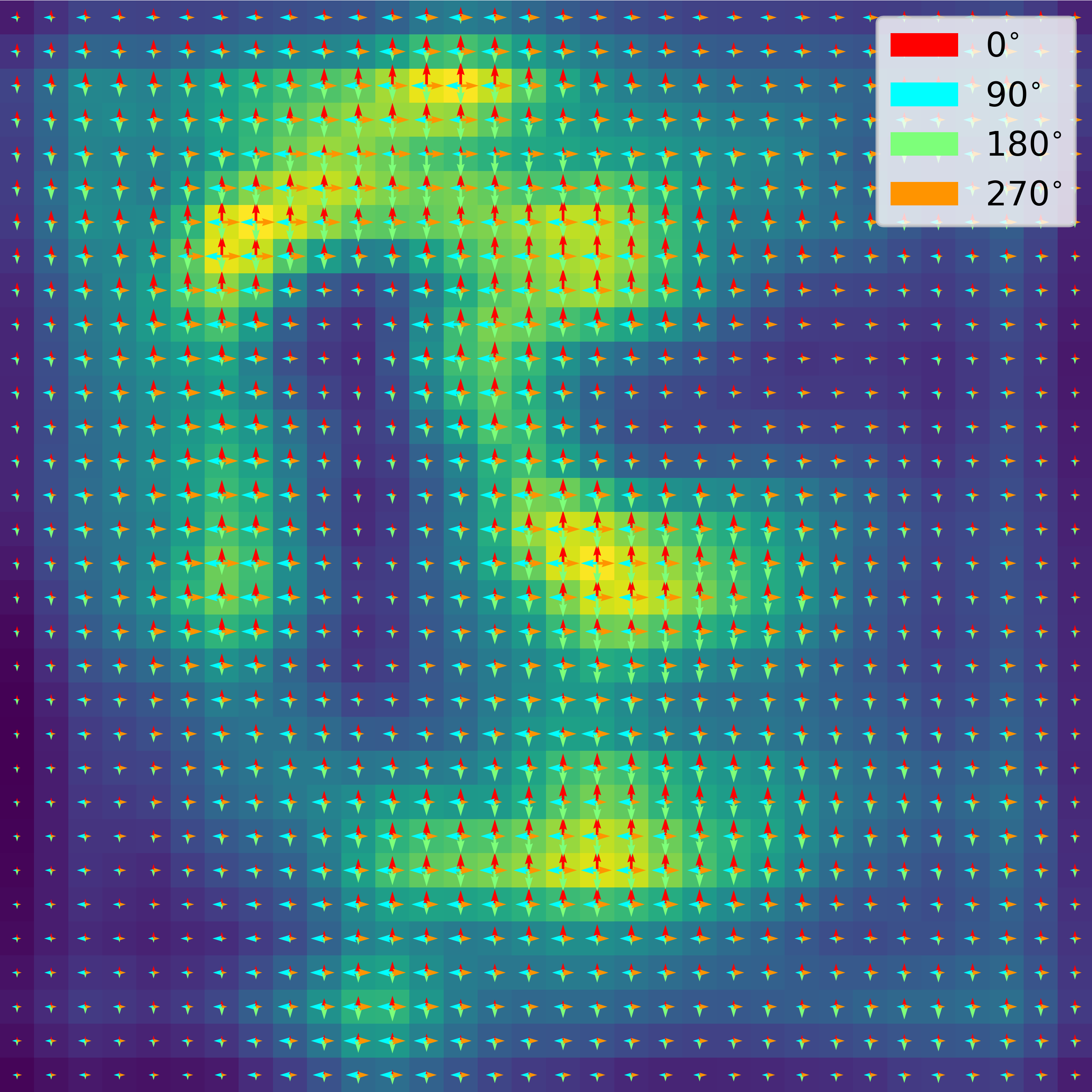}
    \end{subfigure}
    \hfill
    \begin{subfigure}{0.17\textwidth}
        \includegraphics[width=\textwidth]{images/x_270_rot.png}
    \end{subfigure}
    \hspace{1mm}
    \begin{subfigure}{0.17\textwidth}
        \includegraphics[width=\textwidth]{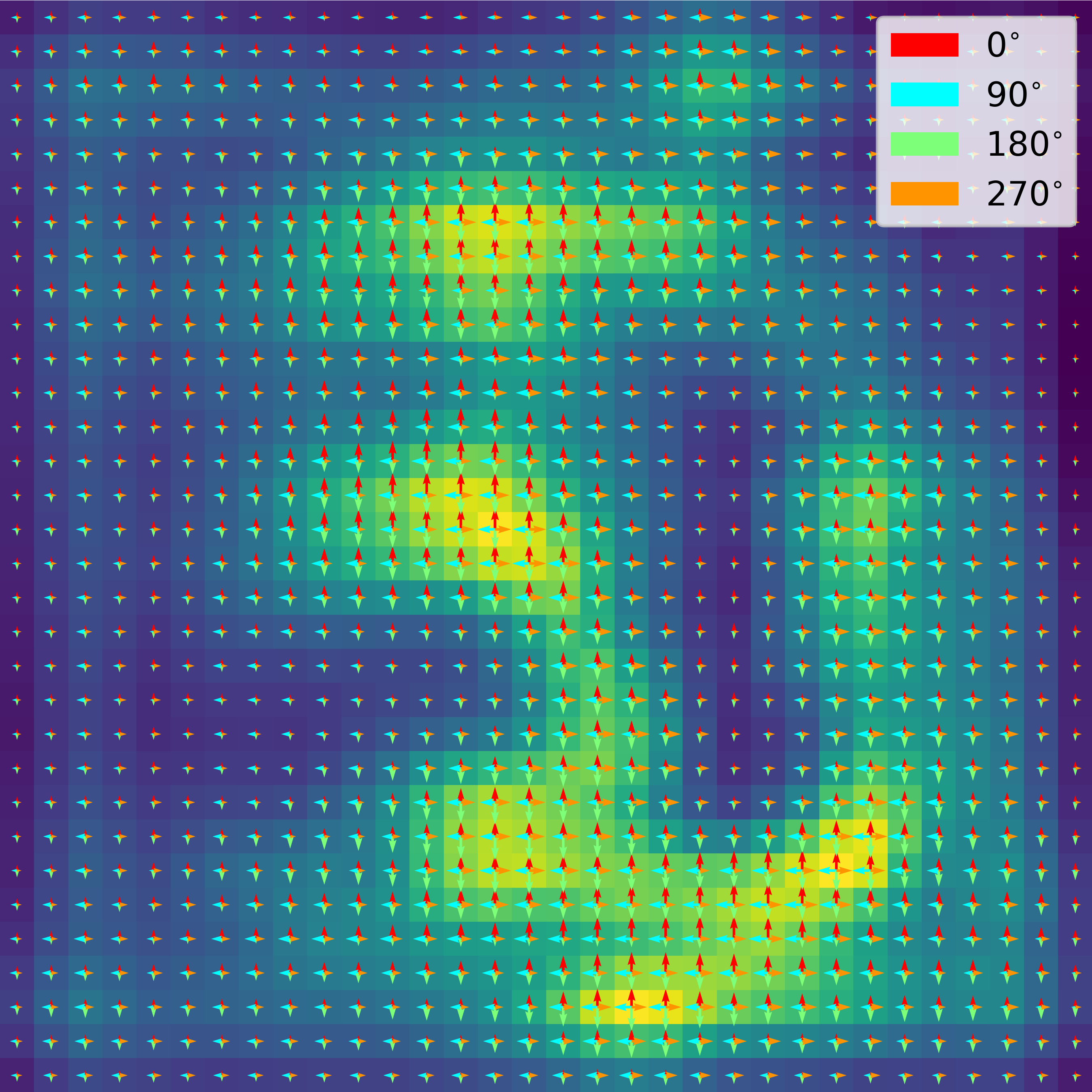}
    \end{subfigure}
    
    \vspace{6mm}
    \centering
    \begin{subfigure}{0.17\textwidth}
        \includegraphics[width=\textwidth]{images/x_180_rot.png}
    \end{subfigure}
    \hspace{1mm}
    \begin{subfigure}{0.17\textwidth}
        \includegraphics[width=\textwidth]{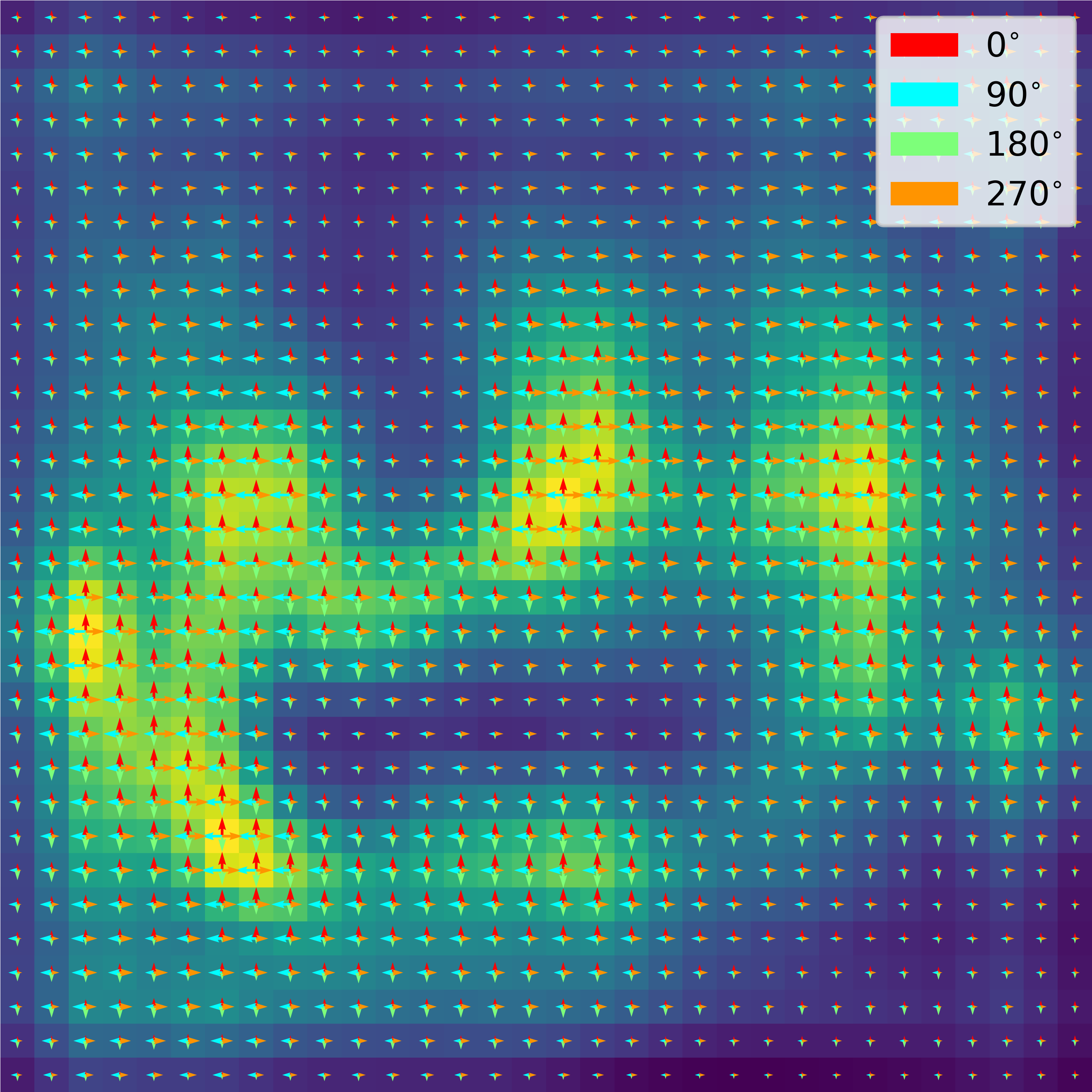}
    \end{subfigure}
    \caption{Examples of equivariant attention maps under the exact equivariance regime.}\label{fig:good_examples}
\end{figure*}

% \fi

%%%%%%%%%%%%%%%%%%%%%%%%%%%%%%%%%%%%%%%%%%%%%%%%%%%%%%%%%%%%%%%%%%%%%%%%%%%%%%%
%%%%%%%%%%%%%%%%%%%%%%%%%%%%%%%%%%%%%%%%%%%%%%%%%%%%%%%%%%%%%%%%%%%%%%%%%%%%%%%
% DELETE THIS PART. DO NOT PLACE CONTENT AFTER THE REFERENCES!
%%%%%%%%%%%%%%%%%%%%%%%%%%%%%%%%%%%%%%%%%%%%%%%%%%%%%%%%%%%%%%%%%%%%%%%%%%%%%%%
%%%%%%%%%%%%%%%%%%%%%%%%%%%%%%%%%%%%%%%%%%%%%%%%%%%%%%%%%%%%%%%%%%%%%%%%%%%%%%%
% \appendix
% \section{Do \emph{not} have an appendix here}

% \textbf{\emph{Do not put content after the references.}}
% %
% Put anything that you might normally include after the references in a separate
% supplementary file.

% We recommend that you build supplementary material in a separate document.
% If you must create one PDF and cut it up, please be careful to use a tool that
% doesn't alter the margins, and that doesn't aggressively rewrite the PDF file.
% pdftk usually works fine. 

% \textbf{Please do not use Apple's preview to cut off supplementary material.} In
% previous years it has altered margins, and created headaches at the camera-ready
% stage. 
%%%%%%%%%%%%%%%%%%%%%%%%%%%%%%%%%%%%%%%%%%%%%%%%%%%%%%%%%%%%%%%%%%%%%%%%%%%%%%%
%%%%%%%%%%%%%%%%%%%%%%%%%%%%%%%%%%%%%%%%%%%%%%%%%%%%%%%%%%%%%%%%%%%%%%%%%%%%%%%

\end{document}